\documentclass[sigconf,nonacm]{acmart}

\AtBeginDocument{}

\setcopyright{none}          \acmYear{2027}
\acmConference[KDD 2027]{ACM KDD 2027}{August 1--5, 2027}{San Jose, United States}
\usepackage{graphicx}
\usepackage{booktabs}
\usepackage{amsmath}

\usepackage{amssymb}
\usepackage{multirow}
\usepackage{algorithm}
\usepackage{enumitem}
\usepackage{algpseudocode}
\usepackage[table]{xcolor}

\begin{document}

\title{Unequal Trips, Unequal Places: Diagnosing and Mitigating Delay Inequity in Autonomous Vehicle Fleet Coordination}

\author{Nicole Hu}
\affiliation{\department{School of Hotel and Tourism Management}
	\institution{The Hong Kong Polytechnic University}
	\city{Hong Kong SAR}
	\country{China}
}
\email{hulan.hu@connect.polyu.hk}

\author{Mingtao Zhang}
\affiliation{\department{Department of Computing}
	\institution{The Hong Kong Polytechnic University}
	\city{Hong Kong SAR}
	\country{China}
}
\email{mingtao.zhang@connect.polyu.hk}

\author{Haoyang Li}
\affiliation{\department{Department of Computing}
	\institution{The Hong Kong Polytechnic University}
	\city{Hong Kong SAR}
	\country{China}
}
\email{haoyang-comp.li@polyu.edu.hk}

\author{Chen Jason Zhang}
\affiliation{\department{School of Hotel and Tourism Management \& Computing}
	\institution{The Hong Kong Polytechnic University}
	\city{Hong Kong SAR}
	\country{China}
}
\email{jason-c.zhang@polyu.edu.hk}

\author{Qing Li}
\affiliation{\department{Department of Computing}
	\institution{The Hong Kong Polytechnic University}
	\city{Hong Kong SAR}
	\country{China}
}
\email{csqli@comp.polyu.edu.hk}

\begin{abstract}
City-scale autonomous vehicle fleet coordinators are typically optimized for aggregate travel time, yet fleet averages conceal how delay is distributed across trips and regions. We conduct a distributional audit on three real-city road-network and taxi-demand datasets from Manhattan, Chicago, and San Francisco. The audit reveals pervasive trip-length inequity whose direction depends on the city and coordinator. After accounting for trip length, spatial inequity becomes more pronounced as demand grows and is consistently stronger when trips are grouped by origin rather than destination. These findings motivate SPatially Aware RErouting (SPARE), a budgeted online coordination framework that assigns limited replanning capacity to delayed vehicles and redirects them using recently observed waiting pressure. SPARE provides a per-review decision guarantee and explicitly bounds online route updates. Experiments on all three datasets against six representative baselines show that SPARE delivers the strongest joint efficiency–fairness performance while retaining city-scale scalability. The results demonstrate that bounded congestion-responsive rerouting improves performance and equity without full-fleet replanning. 

\end{abstract}

\keywords{Urban Computing, Delay Inequity, Autonomous Vehicle Fleet Coordination, Fairness, Congestion-Aware Rerouting}

\maketitle

{

\section{Introduction}

Urban transportation systems suffer from a basic efficiency paradox. Routing decisions that are optimal for each vehicle can combine into system-wide congestion that slows every participant~\cite{poa,congestiongames}. Autonomous vehicles (AVs) present a transformative opportunity, as coordination algorithms could adjust vehicle behavior to traffic conditions in real time, organizing circulating flows at congested intersections while permitting direct routes when roads are clear~\cite{li2026local}. This gives rise to the task of city-scale fleet coordination, where thousands of AVs are routed over one shared road network under shared capacity constraints. AV fleets already carry passengers in everyday robotaxi service~\cite{waymosafety}, coordinated fleets are expected to extend to delivery, logistics, and city-wide mobility services~\cite{fleetmanage,rethinkdispatch}, and how a city's delay is produced and distributed has long been a central concern of urban computing~\cite{urbancomputing}.

Depending on the underlying technique, existing coordination approaches fall into two categories. First, classical traffic assignment treats routing as a flow problem, assigning demand to a user equilibrium or a system optimum of the network~\cite{wardrop,sheffi}, and later variants interpolate between the two or bound how far any driver is detoured~\cite{itap,constrainedso}. Second, multi-agent path finding (MAPF) treats each vehicle as an agent, planning a path per vehicle and resolving conflicts at shared nodes and edges~\cite{mapfsurvey}. Methods in this line scale from priority-based rules over thousands of agents~\cite{pibt}, to congestion-aware replanning at every step~\cite{gpibt}, and most recently to real city road graphs, where global shortest-path guidance is combined with local conflict resolution~\cite{li2026local}. This latest generation coordinates thousands of vehicles on real networks and is the setting we study. Despite their methodological differences, both lines are designed and evaluated primarily through aggregate efficiency, namely fleet-level travel delay or throughput.

Optimizing aggregate efficiency alone leaves a distributional fairness
problem~\cite{biasfairsurvey}. Average delay describes the fleet as a whole
but not how that delay is divided among trips. Two coordinators with the same
average can produce very different service: one may delay every trip mildly,
whereas another may impose severe delay on a smaller group. Minimizing total
travel time does not distinguish between them because it places no constraint
on who absorbs the delay. In a mobility service, however, trip delay is part
of the quality of service received by a passenger. Systematic differences in
slowdown across otherwise comparable trips are therefore operational
inequities even when no demographic attributes are available.

We audit representative methods from both coordination categories on three
real-city road networks under matched taxi demand. The audit separates two
dimensions of inequity. First, slowdown differs systematically across free-flow
trip-length groups, but the direction of the disparity changes across cities
and coordinators: shorter, longer, or intermediate trips may receive worse
proportional service. The recurring failure is therefore not that one length
group always loses, but that proportional delay is distributed unevenly
across length groups. Second, after conditioning on trip length, substantial
geographic disparities remain. They grow with network-relative demand under
both origin and destination grouping and recur across coordinators and cities.
Origin grouping consistently reveals the larger gap, suggesting that where a
trip begins is particularly consequential without reducing spatial inequity
to an exclusively origin-based phenomenon.

The diagnosis exposes a distinct online coordination problem. Static guide
paths cannot react to waiting pressure created during execution, yet
replanning every vehicle at every step is computationally prohibitive. We
therefore introduce \textbf{SP}atially \textbf{A}ware
\textbf{RE}routing (SPARE), a budgeted online fleet-coordination framework.
At each review, SPARE allocates a limited replanning budget to the vehicles
with the greatest accumulated delay and computes new guide paths that trade
free-flow distance against recently observed waiting pressure. The review
interval $K$ and reroute budget $R$ explicitly control its online cost. This
formulation couples delay-prioritized intervention with
congestion-responsive routing rather than repeatedly solving a full-fleet
planning problem. Evaluation on three real-city datasets against six
representative baselines shows that SPARE improves fleet-level efficiency,
reduces spatial disparity, and delivers the strongest joint
efficiency--fairness performance while retaining city-scale scalability.
The results establish that targeted online adaptation improves aggregate and
distributional outcomes together without repeated full-fleet replanning.

The contributions of this paper are summarized as follows.

\begin{itemize}[leftmargin=*]
	\item We conduct cross-city evaluations using official taxi records and uniformly processed road networks, and audit fleet-coordination delay across trip-length groups and geographic regions.
	\item We show that trip-length inequity is pervasive but direction-dependent
	across cities and coordinators. After controlling for these length effects,
	spatial inequity grows with network-relative demand under both origin and
	destination grouping, with origin grouping consistently revealing the
	larger disparity.
	\item We formulate budgeted online route adaptation and propose SPARE, which
	combines delay-prioritized vehicle selection with congestion-responsive
	path planning. SPARE provides a per-review decision guarantee and a bounded
	online planning budget; evaluation against six baselines shows
	that it improves efficiency and spatial fairness while retaining scalability.
\end{itemize}

}
  
 {
\section{Problem Setting and Related Work}
\label{sec:related}

\subsection{Fleet Coordination Model}
\label{sec:model}

Vehicles move on a directed road graph $G=(V,E)$ in discrete time. Each
node has an occupancy capacity, and each edge has a capacity derived from
its lane count. At every step, an active vehicle either advances along one
edge, which takes one time unit, or waits. Given origin--destination pairs
$(s_i,g_i)$ for vehicles $i=1{\ldots}N$, let $p_i$ be vehicle $i$'s current
position, $g_i$ its goal, and $P_i$ its remaining guide path. A
capacity-feasible movement resolver
\[
\textsc{ResolveStep}\bigl(G,\{p_i\},\{g_i\},\{P_i\}\bigr)
\]
returns the next positions and maintains valid remaining guide paths while
respecting node and edge capacities. This separation distinguishes
\emph{route adaptation}, which determines the guide paths, from
\emph{movement resolution}, which decides which feasible moves occur at the
current step.

Let $t_i$ denote the realized travel time of vehicle $i$. The conventional
coordination objective is to bring all vehicles to their destinations while
minimizing aggregate travel time $\sum_i t_i$. We retain the same operational
model but ask two distributional questions: which trips absorb the
coordination delay, and how can guide paths respond to the congestion formed
during execution without invoking full-fleet replanning?

\subsection{City-Scale Fleet Coordination}

Classical traffic assignment treats routing as a flow problem and
characterizes user-equilibrium or system-optimal assignments~\cite{wardrop,sheffi}.
Later methods interpolate between them~\cite{itap} or constrain individual
detours~\cite{constrainedso}. Learned path representations provide scalable
encodings for route-level tasks~\cite{lightpath}, whereas multi-agent path
finding represents each vehicle explicitly and resolves conflicts at shared
nodes and edges~\cite{mapfsurvey}. This line ranges from priority-based rules that
scale to thousands of agents~\cite{pibt}, through congestion-aware
replanning~\cite{gpibt}, to coordinators that combine global guidance with
local conflict resolution on real city graphs~\cite{li2026local}. Parallel
work on traffic forecasting has developed scalable spatiotemporal models for
large urban networks~\cite{patchstg}.

Related urban-computing systems address other fleet decisions, including
dispatch and repositioning~\cite{urbancomputing,fleetmanage,orderdispatch,rethinkdispatch},
network-wide signal control~\cite{presslight,bigdatatrafficsignals}, and
aggregate traffic shaping with a controlled fraction of
vehicles~\cite{mixedautonomy}. Recent urban-intelligence perspectives further
connect heterogeneous city observations to integrated governance
decisions~\cite{neuralcity}. Across these settings, the dominant criteria
remain total delay and throughput. 
Full congestion-aware replanning updates routes online but wastes effort replanning the entire fleet. SPARE instead frames online route adaptation as a budget-allocation problem: each review allocates limited replanning capacity to selected delayed vehicles, redirecting their paths based on currently observed congestion.

\subsection{Fairness in Mobility Systems}

Fairness has been studied in several adjacent stages of mobility, including
learning methods that explicitly reduce performance disparities across
locations~\cite{metaref}. Centralized routing can bound how far a driver is
detoured~\cite{constrainedso,fairrouting},
and interpolated assignment measures disparities among drivers sharing an
origin and destination~\cite{itap}. Mobility platforms balance earnings
across ride-hailing drivers~\cite{fairassign,ridefair}, correct demographic
bias in demand prediction~\cite{fairdemand}, reduce sample-level performance
heterogeneity in spatiotemporal forecasting~\cite{fairstg}, and measure
neighborhood differences in matching and pickup
service~\cite{ridehailineq}. These problems concern assignment, prediction,
or service access. We study a different stage: delay produced during movement
after a fleet has already been dispatched onto a shared, capacity-limited
network.

Our evaluation follows the broader principle that comparisons should
condition on task-relevant attributes~\cite{fairaware,eqopp,biasfairsurvey}.
Multi-agent work has likewise learned or constrained fair objectives
~\cite{marlfair,multiagentfair}, while fairness-aware MAPF has considered
envy-freeness and max-min welfare on synthetic grids~\cite{fairmapf}. Our
units are trips with different travel requirements rather than
interchangeable agents. Rather than presuming that one trip-length group is
systematically disadvantaged, we audit the complete slowdown profile and
then condition on free-flow trip length when comparing geographic groups.
This separates length-dependent service differences from systematic
differences associated with where a trip begins or ends.
}
 \section{Diagnosing Delay Inequity}
\label{sec:setup}
This section diagnoses two forms of delay disparity on three large-scale city
road networks. Section~\ref{sec:length} examines how slowdown varies with trip
length and shows that the disadvantaged length group depends on the city and
coordinator.
Section~\ref{sec:spatial} then conditions on trip length and compares regional
disparity by trip origin and destination. Spatial inequity grows with demand
under both groupings, while origin grouping consistently reveals the larger
gap. Together, the analyses identify who is delayed and where disparities
emerge.

\subsection{Data, Methods, and Metrics}
\label{sec:prelim}
The following data, protocol, baselines, and metrics govern both the
diagnostic analyses in this section and the comparative evaluation in
Section~\ref{sec:evaluation}.

\paragraph{Data and networks.}
We study three widely used real city networks, each paired with the official trip
records released by its own city and summarized in Table~\ref{tab:cities}.
Each drivable network is built from
OpenStreetMap by one city-agnostic pipeline, with node and edge capacities read from
recorded lane counts.
\begin{itemize}[leftmargin=*]
	\item \textbf{Manhattan}~\cite{nyctlc}, $4{,}594$ nodes and $9{,}856$ edges, with
	demand from the NYC TLC Trip Record Data, the standard public source
	for city-scale fleet studies.
	\item \textbf{Chicago}~\cite{chicagotaxi}, $18{,}702$ nodes and $48{,}331$ edges, with demand from the
City of Chicago Taxi Trips open data.
\item \textbf{San Francisco}~\cite{sfmtataxi}, $10{,}147$ nodes and $27{,}968$ edges, with demand
from the SFMTA taxi trip records.
\end{itemize}

\paragraph{OD construction and initialization.}
{Before OD construction, we remove trip records with invalid
coordinates, nonpositive duration, or endpoints outside the study network.}
Origin--destination pairs are sampled from the retained records rather than
drawn uniformly, preserving the spatial imbalance of actual travel. Endpoints
are matched to the nearest road node. All trips are released at $t=0$ by
default as a controlled stress test.

\paragraph{Methods.}
Across the paper, we consider six representative baselines spanning routing,
traffic assignment, multi-agent coordination, and explicit fairness.
\begin{itemize}[leftmargin=*]
	\item \textbf{SP}~\cite{poa,congestiongames} and \textbf{GSP}: selfish
	shortest-path routing with contention delays, and a greedy variant that
	selects the feasible neighboring node with the shortest remaining
	free-flow distance.
	\item \textbf{TAP}~\cite{wardrop,sheffi}: a user-equilibrium traffic-assignment
	baseline.
	\item \textbf{PIBT}~\cite{pibt}: priority-based multi-agent path finding.
	\item \textbf{FAIR-RD}: a scheduling baseline motivated by max-min
	fairness over agents~\cite{fairmapf}; when vehicles contend, it gives
	priority to trips that are already running behind.
	\item \textbf{GLC}~\cite{li2026local}: a coordinator that combines global
	shortest-path guidance with local priority inheritance and backtracking.
\end{itemize}
The diagnostic analyses use the scalable subset required for each city and
demand sweep, with the included methods stated in the corresponding figure.
Section~\ref{sec:density} evaluates SPARE against the complete baseline suite.
TAP is omitted from the large-demand diagnostic sweeps because it does not
complete within the runtime limit.

\paragraph{Evaluation protocol.}
We normalize demand by network size as $\rho=N/|V|$, with the value of
$\rho$ reported in each figure. Results are averaged over ten matched OD
samples, with 95\% confidence intervals unless otherwise stated.

\paragraph{Metrics used throughout the paper.}
For a trip $i$, let $t_i$ be its realized travel time and $\ell_i$ its
free-flow travel time, both measured in steps. Free-flow time is the travel
time of the shortest path between the trip's origin and destination in an empty
network, so $t_i \ge \ell_i$.

\begin{itemize}[leftmargin=*]
	\item \textbf{Slowdown.}
	$\sigma_i=t_i/\ell_i \ge 1$ measures how much longer trip $i$ takes than
	its free-flow shortest path.
	
	\item \textbf{Overhead (OH).}
	$\mathrm{OH}=\frac{\sum_i t_i}{\sum_i \ell_i}-1 \ge 0$ is the fleet's
	aggregate travel-time overhead relative to the free-flow lower bound.
	
	\item \textbf{Length-group disparity (LGD).}
	We partition trips into $B=10$ equal-size bins by free-flow time. Let
	$I_k$ contain the trips in bin $k$, let $b(i)$ be the bin of trip $i$, and let
	$\overline{\sigma}_k=|I_k|^{-1}\sum_{i\in I_k}\sigma_i$ be the mean slowdown
	of bin $k$. Section~\ref{sec:length} reports the complete decile profile,
	while $\mathrm{LGD}=\mathrm{std}_{k=1,\ldots,B}
	(\overline{\sigma}_k)$ summarizes its variation. A larger LGD indicates
	greater disparity across trip-length groups.

	\item \textbf{Spatial inequity (SI).}
	Within each experimental condition, we remove the length profile using
	$r_i=\sigma_i-\overline{\sigma}_{b(i)}$. For an origin or destination taxi
	zone $g$, let $\bar r_g$ be the mean residual slowdown of its trips. We define
	\begin{equation}
		\mathrm{SI}
		= \mathrm{std}_{g\in\mathcal{G}^{+}}(\bar r_g),
		\label{eq:si}
	\end{equation}
	where $\mathcal{G}^{+}$ contains the represented zones and $\mathrm{std}$
	is the unweighted population standard deviation. Larger SI indicates greater
	between-zone disparity; origin and destination grouping give
	$\mathrm{SI}_{\mathrm{o}}$ and $\mathrm{SI}_{\mathrm{d}}$, respectively.
	Because taxi-zone systems differ, we compare SI within each city.
\end{itemize}
\begin{table}[t]
	\centering
	\small
	\vspace{-1em}
\caption{Road-network and demand statistics.}
	\vspace{-1em}
	\label{tab:cities}
	\begin{tabular}{@{}lrrl@{}}
		\hline
		City & Nodes & Edges & Demand source \\
		\hline
		Manhattan & $4{,}594$ & $9{,}856$ & NYC TLC records \\
		Chicago & $18{,}702$ & $48{,}331$ & Chicago taxi trips \\
		San Francisco & $10{,}147$ & $27{,}968$ & SFMTA taxi trips \\
		\hline
	\end{tabular}
	\vspace{-1em}
\end{table}
\subsection{Trip-Length Inequity Varies}
\label{sec:length}

\begin{figure}[t]
	\centering
	\includegraphics[width=\linewidth]{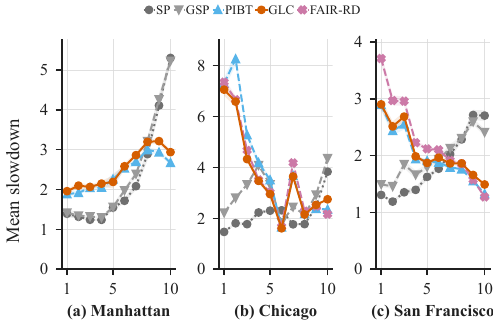}
		\vspace{-1em}
		\caption{Trip-length inequity   across cities.}
				\vspace{-2em}
	\label{fig:length}
\end{figure}

Slowdown separates proportional delay from absolute trip duration. A flat
decile profile represents parity across trip lengths, whereas any systematic
rise, decline, or non-monotonic pattern reveals length-dependent service.

As shown in Figure~\ref{fig:length}, the three cities exhibit a decisive
reversal. In Manhattan, SP follows a U-shaped profile and rises sharply over
the longest deciles. PIBT and GLC vary less, but both still impose higher
slowdown on later deciles than on the shortest trips. Chicago produces a
different ordering: SP trends upward overall, whereas PIBT and GLC impose
their greatest slowdown on the shortest trips and exhibit a non-monotonic
spike around the seventh decile. San Francisco shows the sharpest split. SP
increasingly penalizes longer trips, while PIBT and GLC begin with high
short-trip slowdown and decline overall with trip length. These within-city
disagreements establish that the direction and shape of trip-length inequity
depend jointly on the city and coordinator; no trip-length group is
universally protected or disadvantaged.

Two mechanisms produce these reversals. Longer routes encounter more
opportunities for congestion, whereas the same absolute wait creates a larger
slowdown for shorter trips. Network topology, routing, and contention
determine which mechanism dominates. Because every city--coordinator pair
induces its own length profile, the spatial analysis below removes this effect
before comparing origin and destination regions.

\subsection{Spatial Inequity Grows with Demand}
\label{sec:spatial}
\label{sec:phenomenon}
\label{sec:mechanism}
\label{sec:taxonomy}

We remove each experimental condition's length profile using the residual
slowdown in Equation~\ref{eq:si}, then group trips by origin or destination
taxi zone. This isolates systematic geographic disparity among trips with
comparable free-flow requirements.

As shown in Figure~\ref{fig:map} and Appendix
Figures~\ref{fig:app-spatial-maps-sp} and~\ref{fig:app-spatial-maps-pibt},
the origin-group residuals under GLC, SP, and PIBT form spatially coherent
clusters in every city: some regions repeatedly receive worse service than
their trip lengths predict, while others receive better service. Positive and
negative residuals concentrate in neighboring zones rather than isolated
outliers, showing that the disparity is geographically organized and recurs
across coordinators and road networks.

\begin{figure}[t]
\centering
\includegraphics[width=\linewidth]{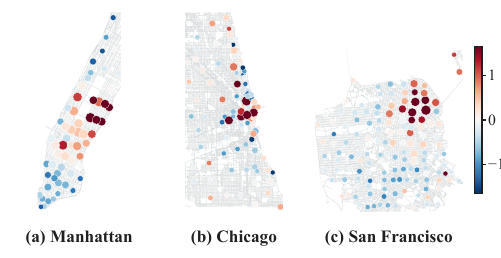}
				\vspace{-2em}
\Description{Three enlarged maps show GLC origin-region residual slowdown in
Manhattan, Chicago, and San Francisco. Red and blue indicate worse and better
service than trips of comparable free-flow length.}
\caption{Origin-side spatial inequity under GLC across cities.}
\label{fig:map}
\end{figure}

As shown in Figure~\ref{fig:density-onset}, spatial inequity increases
with demand across SP, PIBT, and GLC in all three cities. Origin grouping
consistently produces larger disparity than destination grouping, and the gap
widens as congestion intensifies. At lighter demand, origin- and
destination-side disparities are relatively close; as more vehicles compete
for the same road capacity, both rise and the origin-side curves separate
more strongly. Congestion exposes disparities hidden by
fleet-wide averages, with origins defining their spatial pattern.

\begin{figure}[t]
\centering
\includegraphics[width=\linewidth]{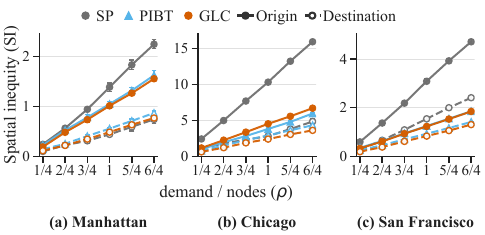}
\vspace{-2em}
\Description{Three aligned panels compare origin- and destination-region
spatial inequity for SP, PIBT, and GLC in Manhattan, Chicago, and San
Francisco. Solid lines with filled markers denote origin grouping; dashed
lines with hollow markers denote destination grouping; error bars show 95
percent confidence intervals over ten fixed demand subsamples.}
\caption{SI grows with demand across cities, led by origin.}
\label{fig:density-onset}

\end{figure}

Together, Figures~\ref{fig:length} and~\ref{fig:density-onset} show that
trip-length inequity depends on the city and coordinator, whereas congestion
consistently amplifies spatial inequity, especially across origins. This
evidence motivates a targeted intervention that redirects delayed vehicles
using realized congestion while explicitly bounding online planning.
 
 \section{SPARE: Budgeted Congestion-Responsive Fleet Coordination}
\label{sec:mitigation}

SPARE treats online coordination as the repeated allocation of a limited
planning budget. At each review, it decides which delayed vehicles should be
reconsidered and where their guide paths should go under the waiting pressure
observed during execution. The method couples two decisions:
delay-prioritized vehicle selection and congestion-responsive route
adaptation. The movement resolver of Section~\ref{sec:model} remains
responsible for producing capacity-feasible moves; SPARE determines the guide
paths that inform those moves. Our experiments instantiate the resolver with
GLC, while the formulation applies to any coordinator with a guide-path
interface.

\subsection{Budgeted Online Coordination}
\label{sec:budgeted-coordination}
\label{sec:module}

SPARE maintains a cumulative delay counter $\tau_i$ for every vehicle and a
node-level waiting field $c[v]$. Whenever vehicle $i$ is unable to move at a
step, $\tau_i$ increases by one and the waiting observed at its current node
is added to $c$. The counter $\tau_i$ represents the service loss accumulated
by the vehicle, while $c$ records where the fleet has recently experienced
contention. Selection does not use a trip's free-flow length or length-bin
membership: intervention priority is determined by delay observed during
execution.

Reviews occur every $K$ steps. Index them by $q=1,2,\ldots$, let $A_q$ be the
active vehicles at review $q$, and let
$D_q=\{i\in A_q:\tau_i>0\}$ be those that have experienced delay. SPARE may
replan at most $R$ vehicles, so it sets
$m_q=\min\{R,|D_q|\}$ and selects
\begin{equation}
B_q\in
\arg\max_{\substack{B\subseteq D_q\\|B|=m_q}}
\sum_{i\in B}\tau_i.
\label{eq:selection}
\end{equation}
Thus the available planning budget is assigned to the vehicles carrying the
largest accumulated delay.

Before constructing new paths, SPARE geometrically decays the waiting field
as $c[v]\leftarrow\tfrac{1}{2}c[v]$ and freezes the resulting values as
$c_q$. The corresponding edge-weight snapshot is
$w_q(u,v)=1+c_q[v]$.
For the current position $p_i$ and goal $g_i$, let
$\mathcal{P}_G(p_i,g_i)$ be the set of directed paths between them. The cost
of a path $P$ under the review snapshot is
\begin{equation}
\begin{aligned}
C_{w_q}(P)
  &=\sum_{(u,v)\in P}w_q(u,v)\\
  &=|P|+\sum_{(u,v)\in P}c_q[v].
\end{aligned}
\label{eq:path-cost}
\end{equation}
The first term measures free-flow path length; the second measures exposure
to recently observed waiting. For every selected vehicle, SPARE computes
\begin{equation}
P_{i,q}^{+}\in
\arg\min_{P\in\mathcal{P}_G(p_i,g_i)}C_{w_q}(P).
\label{eq:local-optimality}
\end{equation}
The review interval $K$ controls how often the system reacts, while $R$
controls how much of the fleet can be reconsidered at one review.

Algorithm~\ref{alg:mitigation} gives the complete procedure. All vehicles
begin with free-flow shortest-path guidance and zero-valued online signals
(lines 1--2). At each review, lines 6--12 decay the waiting field, identify
the delayed vehicles covered by the budget, and replace only their guide
paths according to Equations~\ref{eq:selection}
and~\ref{eq:local-optimality}; vehicles outside $B$ retain their current
guidance. Line 14 invokes the movement resolver, which returns both
capacity-feasible next positions and the advanced or repaired remaining
paths. Lines 15--20 then record newly observed waits and update the execution
state. The waiting field is decayed because it represents recent network
pressure, whereas $\tau_i$ remains cumulative because it represents the
service loss carried by vehicle $i$ across reviews.

\begin{algorithm}[t]
\caption{SPARE: budgeted online fleet coordination.}
\label{alg:mitigation}
\begin{algorithmic}[1]
\Require road graph $G=(V,E)$; origins $s_i$ and goals $g_i$ for $i=1{\ldots}N$; review interval $K$; reroute budget $R$
\State $p_i\gets s_i$; \quad $P_i\gets\textsc{ShortestPath}(G,s_i,g_i)$ for all $i$
\State $\tau_i\gets0$ for all $i$; \quad $c[v]\gets0$ for all $v$
\For{$t=1,2,\ldots$ until all vehicles reach their goals}
  \State $A\gets\{i:p_i\ne g_i\}$
  \If{$t\bmod K=0$}
    \State $c[v]\gets\tfrac12c[v]$ for all $v$
    \State $w(u,v)\gets1+c[v]$ for all $(u,v)\in E$
    \State $D\gets\{i\in A:\tau_i>0\}$; \quad $m\gets\min\{R,|D|\}$
    \State $B\gets$ the $m$ vehicles in $D$ with largest $\tau_i$
    \ForAll{$i\in B$}
      \State $P_i\gets\textsc{ShortestPath}(G,p_i,g_i;w)$
    \EndFor
  \EndIf
  \State $(\{p'_i\},\{\widetilde P_i\})\gets
  \textsc{ResolveStep}(G,\{p_i\}_{i\in A},\{g_i\},\{P_i\})$
  \ForAll{$i\in A$}
    \If{$p'_i=p_i$}
      \State $\tau_i\gets\tau_i+1$; \quad $c[p_i]\gets c[p_i]+1$
    \EndIf
    \State $p_i\gets p'_i$; \quad $P_i\gets\widetilde P_i$
  \EndFor
\EndFor
\end{algorithmic}
\end{algorithm}

\subsection{Guarantees and Complexity}
\label{sec:spare-guarantees}

\begin{theorem}[Per-review budget guarantee]
\label{thm:spare-budget}
Assume that every active vehicle's goal is reachable from its current
position. At review $q$, SPARE's selected set $B_q$ maximizes the total
accumulated delay addressed by any subset of $m_q$ delayed vehicles, and the
new guide path of every $i\in B_q$ minimizes $C_{w_q}$ over
$\mathcal{P}_G(p_i,g_i)$. Moreover, after the initial guide paths are
constructed, any execution of $T$ steps performs at most
$R\lfloor T/K\rfloor$ additional shortest-path computations.
\end{theorem}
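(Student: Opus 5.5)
The proof splits into the three claims of Theorem~\ref{thm:spare-budget}, each a direct check against Algorithm~\ref{alg:mitigation}.

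\textbf{Selection optimality.} The objective in Equation~\ref{eq:selection} is additive with nonnegative weights $\tau_i$ under a cardinality constraint. A greedy top-$m_q$ choice is therefore optimal by an exchange argument. Suppose $B^\ast\subseteq D_q$ with $|B^\ast|=m_q$ has strictly larger total delay than the set $B$ returned in line~9. Then there exist $j\in B^\ast\setminus B$ and $i\in B\setminus B^\ast$. Because $B$ consists of the $m_q$ largest counters in $D_q$, we have $\tau_i\ge\tau_j$, so replacing $j$ by $i$ in $B^\ast$ does not decrease its total. Iterating this swap at most $m_q$ times transforms $B^\ast$ into $B$ without decreasing the sum, which contradicts the assumed strict inequality. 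Ties are harmless, since the statement only asserts membership in the $\arg\max$. One should also note that $m_q=\min\{R,|D_q|\}$ makes the feasible family nonempty, so the maximum exists.

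\textbf{Path optimality.} Lines~6--7 freeze $c_q$ and define $w_q(u,v)=1+c_q[v]$. The waiting field starts at zero, is only incremented by $+1$ (line~17), and is halved (line~6), so $c_q[v]\ge0$ and hence $w_q\ge1>0$. With strictly positive edge weights, the cost $C_{w_q}$ in Equation~\ref{eq:path-cost} is exactly the weighted length. Under the reachability assumption, $\mathcal{P}_G(p_i,g_i)$ is nonempty. Any walk can be shortcut to a simple path of no greater cost, so a minimizer exists among the finitely many simple paths, and Dijkstra's algorithm correctly returns one. This establishes Equation~\ref{eq:local-optimality}.

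The step needing care is that all selected vehicles use the same snapshot. The weights $w$ are computed once, before the loop over $B$ in lines~10--12, and $c$ is modified only after \textsc{ResolveStep} (lines~15--20). Each replanning within review $q$ therefore sees $w_q$, even though the algorithm's variable $c$ later changes. A related subtlety is whether $\mathcal{P}_G$ means simple paths or walks; the shortcut argument above covers both readings.

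\textbf{Computation count.} After line~1, shortest-path calls occur only in line~11, which executes only when $t\bmod K=0$, and then at most $|B|=m\le R$ times. The number of review steps among $t=1,\ldots,T$ is exactly $\lfloor T/K\rfloor$. Summing gives at most $R\lfloor T/K\rfloor$ additional calls. This count assumes that \textsc{ResolveStep} is treated as the resolver of Section~\ref{sec:model}, which maintains valid paths and whose internal repairs are not counted as SPARE's shortest-path computations. The statement should make this accounting convention explicit, and it is the main point where the argument depends on interpretation rather than calculation.
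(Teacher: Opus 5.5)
Your proposal is correct and follows the same three-part argument as the paper. The top-$m_q$ choice maximizes the sum of the $\tau_i$; the snapshot weights $1+c_q[v]$ are positive, so the weighted shortest-path call returns a minimizer of $C_{w_q}$; and there are at most $\lfloor T/K\rfloor$ reviews, each with at most $R$ recomputations. You simply make explicit what the paper leaves implicit: the exchange argument, why $c_q\ge 0$, existence of a minimizer under reachability, the single-snapshot issue, and the convention that the resolver's path repairs are not counted.
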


\begin{proof}
Selecting the $m_q$ largest values of $\tau_i$ maximizes their sum over all
size-$m_q$ subsets of $D_q$, proving the selection claim. With $c_q$ fixed,
all weights $w_q(u,v)=1+c_q[v]$ are positive, and the shortest-path
computation in Algorithm~\ref{alg:mitigation} therefore returns a minimizer
of $C_{w_q}$ for each selected vehicle. Finally, reviews occur only at steps
divisible by $K$, giving at most $\lfloor T/K\rfloor$ reviews, and each
review computes at most $R$ new paths.
\end{proof}

Let $P_{i,q}^{-}$ be vehicle $i$'s valid remaining guide path immediately
before review $q$. Equation~\ref{eq:local-optimality} directly gives
\begin{equation}
C_{w_q}(P_{i,q}^{+})\le C_{w_q}(P_{i,q}^{-}).
\label{eq:local-improvement}
\end{equation}
This is a per-review decision guarantee, not a claim that one reroute must
reduce realized travel time or spatial inequity: later interactions can
change the congestion encountered by every vehicle. The global efficiency
and distributional effects are therefore evaluated empirically.

\paragraph{Time complexity.}
With adjacency lists and binary-heap Dijkstra, constructing the initial
guide paths costs $O(N(|V|+|E|)\log|V|)$. At one review, decaying the waiting
field and materializing the edge weights costs $O(|V|+|E|)$, selecting the
largest counters with a size-$R$ min-heap costs $O(N\log(R+1))$, and the
route updates cost $O(R(|V|+|E|)\log|V|)$. Excluding the movement resolver,
whose cost is shared by all compared route-adaptation policies, SPARE's
amortized online work per step is
\begin{equation}
O\!\left(
N+
\frac{
R(|V|+|E|)\log|V|
+|V|+|E|+N\log(R+1)
}{K}
\right).
\label{eq:amortized-complexity}
\end{equation}
The $O(N)$ term updates the two online signals once per active vehicle and
can be fused with the resolver's existing vehicle loop. In contrast,
full-fleet replanning invokes up to $N$ route computations at every step.

\paragraph{Space complexity.}
The waiting field uses $O(|V|)$ space, the counters and selection heap use
$O(N+R)$, and edge weights use $O(|E|)$. Recomputed guide paths replace paths
maintained by the coordination state, so SPARE adds
$O(|V|+|E|+N)$ space.
 \newcommand{\best}[1]{{\bfseries\boldmath $#1$}}

\begin{table*}[ht]
	\centering
	\small
	\setlength{\tabcolsep}{1.3pt}
	\renewcommand{\arraystretch}{1}
	\caption{Cross-city comparison at the node-normalized demand setting $\rho=1$. Lower is better, including CPU time; bold marks the best result. OOT denotes out of time, exceeding 2 hours.}
	\label{tab:main}
	\begin{tabular}{@{}l|lrrrrrr@{}}
		\toprule
		\textbf{City} & \textbf{Method} & \textbf{Overhead (\%)} & \textbf{Mean slowdown} & \textbf{LGD} & \textbf{Origin SI} & \textbf{Destination SI} & \textbf{CPU time (s)} \\
		\midrule
		\multirow{7}{*}{\textbf{Manhattan}} & SP~\cite{poa,congestiongames} & $367.9 \pm 8.3$ & $4.08 \pm 0.07$ & $1.403 \pm 0.065$ & $1.386 \pm 0.126$ & $0.442 \pm 0.074$ & $4.0 \pm 0.9$ \\
		& GSP~\cite{poa,congestiongames} & $324.9 \pm 6.3$ & $3.70 \pm 0.06$ & $1.298 \pm 0.073$ & $1.112 \pm 0.039$ & $0.372 \pm 0.042$ & \best{3.7 \pm 0.5} \\
		& TAP~\cite{wardrop,sheffi} & $289.8 \pm 23.4$ & $3.61 \pm 0.21$ & $0.746 \pm 0.090$ & $0.973 \pm 0.112$ & $0.471 \pm 0.154$ & $974.3 \pm 257.1$ \\
		& PIBT~\cite{pibt} & $251.6 \pm 7.4$ & $3.69 \pm 0.06$ & $0.699 \pm 0.050$ & $1.047 \pm 0.078$ & $0.558 \pm 0.031$ & $57.1 \pm 22.8$ \\
		& FAIR-RD~\cite{fairmapf} & $292.6 \pm 6.2$ & $4.56 \pm 0.07$ & $1.588 \pm 0.057$ & $1.316 \pm 0.096$ & $0.761 \pm 0.078$ & $41.6 \pm 18.8$ \\
		& GLC~\cite{li2026local} & $256.5 \pm 7.4$ & $3.45 \pm 0.07$ & $0.381 \pm 0.037$ & $1.013 \pm 0.078$ & $0.487 \pm 0.045$ & $40.4 \pm 15.6$ \\
		& \cellcolor{gray!20}\textbf{SPARE} & \cellcolor{gray!20}\best{180.7 \pm 2.7} & \cellcolor{gray!20}\best{2.84 \pm 0.02} & \cellcolor{gray!20}\best{0.123 \pm 0.017} & \cellcolor{gray!20}\best{0.561 \pm 0.042} & \cellcolor{gray!20}\best{0.262 \pm 0.015} & \cellcolor{gray!20}$86.9 \pm 3.7$ \\
		\midrule
		\multirow{7}{*}{\textbf{Chicago}} & SP~\cite{poa,congestiongames} & $3222.5 \pm 106.1$ & $34.36 \pm 1.06$ & $7.080 \pm 0.355$ & $10.323 \pm 0.280$ & $2.828 \pm 0.383$ & $115.6 \pm 46.5$ \\
		& GSP~\cite{poa,congestiongames} & $3153.4 \pm 58.7$ & $31.84 \pm 0.57$ & $7.888 \pm 0.384$ & $9.996 \pm 0.220$ & $3.550 \pm 0.314$ & \best{99.6 \pm 43.7} \\
		& TAP~\cite{wardrop,sheffi} & \textsc{OOT} & \textsc{OOT} & \textsc{OOT} & \textsc{OOT} & \textsc{OOT} & \textsc{OOT} \\
		& PIBT~\cite{pibt} & $1540.8 \pm 14.6$ & $19.17 \pm 0.21$ & $6.735 \pm 0.135$ & \best{3.825 \pm 0.185} & $2.931 \pm 0.121$ & $1811.9 \pm 834.5$ \\
		& FAIR-RD~\cite{fairmapf} & $1622.2 \pm 8.3$ & $20.51 \pm 0.15$ & $7.858 \pm 0.113$ & $5.054 \pm 0.238$ & $3.833 \pm 0.128$ & $814.5 \pm 303.0$ \\
		& GLC~\cite{li2026local} & $1463.4 \pm 10.9$ & $17.31 \pm 0.13$ & $5.672 \pm 0.130$ & $4.540 \pm 0.224$ & $2.420 \pm 0.104$ & $1073.4 \pm 412.2$ \\
		& \cellcolor{gray!20}\textbf{SPARE} & \cellcolor{gray!20}\best{1436.3 \pm 9.8} & \cellcolor{gray!20}\best{17.09 \pm 0.15} & \cellcolor{gray!20}\best{5.672 \pm 0.164} & \cellcolor{gray!20}$4.163 \pm 0.167$ & \cellcolor{gray!20}\best{2.351 \pm 0.090} & \cellcolor{gray!20}$1021.1 \pm 145.3$ \\
		\midrule
		\multirow{7}{*}{\textbf{San Francisco}} & SP~\cite{poa,congestiongames} & $727.2 \pm 24.9$ & $8.23 \pm 0.27$ & $2.031 \pm 0.149$ & $3.087 \pm 0.155$ & $1.547 \pm 0.068$ & $20.9 \pm 6.9$ \\
		& GSP~\cite{poa,congestiongames} & $662.2 \pm 14.5$ & $7.69 \pm 0.17$ & $2.044 \pm 0.143$ & $2.905 \pm 0.077$ & $1.345 \pm 0.057$ & \best{13.2 \pm 5.0} \\
		& TAP~\cite{wardrop,sheffi} & \textsc{OOT} & \textsc{OOT} & \textsc{OOT} & \textsc{OOT} & \textsc{OOT} & \textsc{OOT} \\
		& PIBT~\cite{pibt} & $318.6 \pm 4.1$ & $5.02 \pm 0.06$ & $1.801 \pm 0.070$ & $1.250 \pm 0.030$ & $0.933 \pm 0.023$ & $264.7 \pm 75.8$ \\
		& FAIR-RD~\cite{fairmapf} & $341.2 \pm 3.8$ & $5.68 \pm 0.06$ & $2.656 \pm 0.079$ & $1.275 \pm 0.045$ & $1.059 \pm 0.029$ & $101.9 \pm 35.7$ \\
		& GLC~\cite{li2026local} & $310.3 \pm 3.7$ & $4.60 \pm 0.05$ & $1.343 \pm 0.040$ & $1.233 \pm 0.029$ & $0.837 \pm 0.023$ & $139.6 \pm 46.0$ \\
		& \cellcolor{gray!20}\textbf{SPARE} & \cellcolor{gray!20}\best{303.2 \pm 3.3} & \cellcolor{gray!20}\best{4.51 \pm 0.05} & \cellcolor{gray!20}\best{1.298 \pm 0.037} & \cellcolor{gray!20}\best{1.199 \pm 0.027} & \cellcolor{gray!20}\best{0.806 \pm 0.024} & \cellcolor{gray!20}$226.0 \pm 10.3$ \\
		\bottomrule
	\end{tabular}
\end{table*}

\section{Experimental Evaluation}
\label{sec:evaluation}

We evaluate SPARE on the Manhattan, Chicago, and San Francisco networks using
their matched taxi-demand records. The cross-city comparison uses $\rho=1$,
while the Manhattan scalability sweep varies $\rho$ from $1/4$ to $6/4$.
The suite contains six baselines---SP and
GSP~\cite{poa,congestiongames}, TAP~\cite{wardrop,sheffi},
PIBT~\cite{pibt}, FAIR-RD~\cite{fairmapf}, and GLC~\cite{li2026local}---and
the fairness comparison additionally sweeps
ITAP-$\alpha$~\cite{itap}. Within each condition, all methods receive
identical OD draws and release times.

\paragraph{Implementation details.}
SPARE uses one fixed, city-specific operating point selected from the sweeps
in Section~\ref{sec:budget}; baseline settings follow their original papers.
Tables report mean $\pm$ population standard deviation over ten fixed demand
subsamples. All runs are single-threaded Python simulations executed on four
machines, each allocating 32 vCPUs of an AMD EPYC 9654 processor and 60\,GB
of RAM. Computational analyses report per-run CPU time.

\subsection{Main Results}
\label{sec:density}

Table~\ref{tab:main} compares all methods across the three cities at
$\rho=1$. Table~\ref{tab:frontier} isolates the comparison with methods that
pursue fairness explicitly, and Figure~\ref{fig:density} tracks effectiveness,
fairness, and computation as Manhattan demand grows.

\subsubsection{Effectiveness and Fairness.}
Overhead and mean slowdown capture fleet efficiency, while LGD and the two SI
measures test whether the remaining delay is concentrated by trip length or
geography. Reading them together prevents an apparent efficiency gain from
hiding a worse distribution of service. As shown in Table~\ref{tab:main}, at
$\rho=1$ SPARE achieves the best result on all five outcome metrics in
Manhattan and San Francisco. In Chicago, it achieves the lowest overhead,
mean slowdown, and Destination SI, matches GLC's LGD at the reported
precision, and reduces Origin SI relative to GLC. PIBT attains a lower Origin
SI in Chicago, but with higher fleet overhead, mean slowdown, and CPU time.

The variation follows how much congestion can be avoided by changing guide
paths. Unlike fixed, static, or priority-only interventions, SPARE redirects
delayed vehicles using realized waiting pressure. This produces the largest
separation in Manhattan, where concentrated bottlenecks leave exploitable
alternatives; the gains are narrower on the larger Chicago and San Francisco
networks. Across all three settings, SPARE delivers the strongest joint
efficiency--fairness profile rather than shifting delay to another
trip-length group or region.

\begin{table}[H]
	\centering
	\normalsize
	\setlength{\tabcolsep}{1.1pt}
	\renewcommand{\arraystretch}{1.03}
	\newcommand{\std}[1]{\mathbin{\pm}{\scriptstyle #1}}
	\caption{Explicit fairness comparison on Manhattan demand.}
	\label{tab:frontier}
	\resizebox{\columnwidth}{!}{\begin{tabular}{lrrrrr}
			\toprule
			\textbf{Method} & \textbf{OH (\%)} & \textbf{Mean} & \textbf{LGD} & \textbf{O-SI} & \textbf{D-SI} \\
			\midrule
			\multicolumn{6}{@{}l}{\textit{Assignment-based fairness methods}} \\
			ITAP-$0$ & $289.8 \std{23.4}$ & $3.61 \std{0.21}$ & $0.746 \std{0.090}$ & $0.973 \std{0.112}$ & $0.471 \std{0.154}$ \\
			ITAP-$0.25$ & $276.8 \std{20.0}$ & $3.49 \std{0.17}$ & $0.727 \std{0.105}$ & $0.937 \std{0.116}$ & $0.430 \std{0.166}$ \\
			ITAP-$0.5$ & $274.9 \std{13.7}$ & $3.50 \std{0.14}$ & $0.658 \std{0.060}$ & $0.981 \std{0.093}$ & $0.402 \std{0.079}$ \\
			ITAP-$0.75$ & $277.8 \std{15.7}$ & $3.52 \std{0.13}$ & $0.684 \std{0.071}$ & $0.979 \std{0.065}$ & $0.362 \std{0.089}$ \\
			ITAP-$1$ & $284.2 \std{24.1}$ & $3.57 \std{0.25}$ & $0.707 \std{0.103}$ & $1.003 \std{0.111}$ & $0.374 \std{0.138}$ \\
			\midrule
			\multicolumn{6}{@{}l}{\textit{Routing and coordination baselines}} \\
			FAIR-RD & $292.6 \std{6.2}$ & $4.56 \std{0.07}$ & $1.588 \std{0.057}$ & $1.316 \std{0.096}$ & $0.761 \std{0.078}$ \\
			GLC & $256.5 \std{7.4}$ & $3.45 \std{0.07}$ & $0.381 \std{0.037}$ & $1.013 \std{0.078}$ & $0.487 \std{0.045}$ \\
			\midrule
			\rowcolor{gray!20}
			\textbf{SPARE (ours)} & $\mathbf{180.7} \std{2.7}$ & $\mathbf{2.84} \std{0.02}$ & $\mathbf{0.123} \std{0.017}$ & $\mathbf{0.561} \std{0.042}$ & $\mathbf{0.262} \std{0.015}$ \\
			\bottomrule
		\end{tabular}
	}
\end{table}
\begin{figure*}[t]
	\centering
	\vspace{-1em}
	\includegraphics[width=\linewidth]{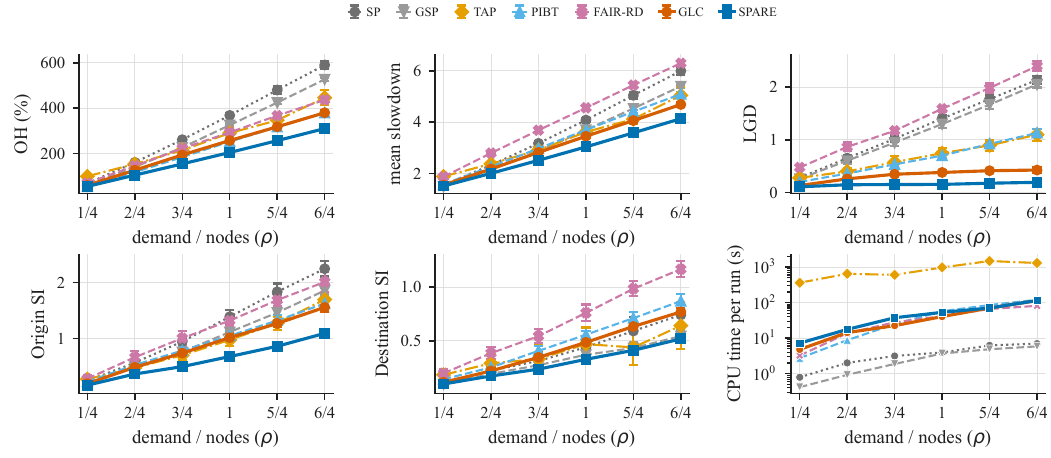}
	\vspace{-2em}
	\caption{Scalability with load ratio on Manhattan demand. }
	\label{fig:density}
\end{figure*}
\subsubsection{Comparison with Explicit Fairness Methods.}
\label{sec:faircomp}
We compare SPARE with two representative fairness interventions. FAIR-RD
prioritizes accumulated delay without changing routes, while
ITAP-$\alpha$~\cite{itap} interpolates static assignment between user
equilibrium and the marginal-cost system optimum. We sweep five ITAP settings
on matched OD samples. Table~\ref{tab:frontier} reports Manhattan. As shown in
Appendix Tables~\ref{tab:app-frontier-chicago}
and~\ref{tab:app-frontier-sf}, SPARE also has the lowest overhead and mean
slowdown among the completed methods in Chicago and San Francisco.

As shown in Table~\ref{tab:frontier}, the ITAP results remain within a narrow
range, while SPARE achieves the lowest overhead, mean slowdown, LGD, Origin
SI, and Destination SI across every ITAP setting, FAIR-RD, and GLC. Static
assignment cannot remove disparities created during execution, and FAIR-RD
changes who moves first rather than the congested route producing the delay.
The comparison therefore establishes the advantage of altering realized route
exposure.

\subsubsection{Scalability with Fleet Size.}
\label{sec:scalability}
We evaluate six Manhattan load ratios from $\rho=1/4$ to $\rho=6/4$ under
matched demand subsamples. As shown in Figure~\ref{fig:density} and Appendix
Figures~\ref{fig:app-density-scalability-chicago}
and~\ref{fig:app-density-scalability-sf}, overhead, slowdown, LGD, and SI rise
with load across all three cities, while SPARE maintains favorable efficiency
and disparity scaling. Its reviews add computation relative to SP and GSP,
yet remain tractable because each review reroutes only a bounded subset. TAP
completes the Manhattan sweep, but
Table~\ref{tab:main} shows that it does not complete at $\rho=1$ on the larger
Chicago and San Francisco networks, where repeated global assignment becomes
prohibitive. SPARE completes the full cross-city and load-ratio evaluation
because its planning work is explicitly controlled by the review interval and
reroute budget.

\begin{table*}[!t]
	\centering
	\small
	\caption{Controlled ablation at $\rho=1$. Entries are mean $\pm$ standard
	deviation over ten fixed demand subsamples; lower is better, and bold marks
	the best result within each city and metric.}
	\label{tab:ablation-four-variants}
	\begin{tabular}{llrrrrr}
		\toprule
		\textbf{City} & \textbf{Variant} & \textbf{Overhead (\%)} & \textbf{Mean slowdown} & \textbf{LGD} & \textbf{Origin SI} & \textbf{Destination SI} \\
		\midrule
		\textbf{Manhattan} & \textbf{SPARE} & $\mathbf{180.7 \pm 2.7}$ & $\mathbf{2.84 \pm 0.02}$ & $\mathbf{0.123 \pm 0.017}$ & $0.561 \pm 0.042$ & $\mathbf{0.262 \pm 0.015}$ \\
		& w/o periodic rerouting & $256.5 \pm 7.8$ & $3.45 \pm 0.07$ & $0.381 \pm 0.039$ & $1.013 \pm 0.082$ & $0.487 \pm 0.047$ \\
		& w/o congestion weighting & $254.6 \pm 7.6$ & $3.43 \pm 0.07$ & $0.388 \pm 0.042$ & $0.984 \pm 0.062$ & $0.471 \pm 0.040$ \\
		& w/o delay ranking & $182.3 \pm 2.1$ & $2.85 \pm 0.02$ & $0.131 \pm 0.014$ & $\mathbf{0.553 \pm 0.004}$ & $0.269 \pm 0.013$ \\
		\midrule
		\textbf{Chicago}  & \textbf{SPARE} & $\mathbf{1436.3 \pm 9.8}$ & $\mathbf{17.09 \pm 0.15}$ & $5.672 \pm 0.164$ & $\mathbf{4.163 \pm 0.167}$ & $2.351 \pm 0.090$ \\
		& w/o periodic rerouting & $1462.6 \pm 11.9$ & $17.29 \pm 0.14$ & $5.645 \pm 0.139$ & $4.522 \pm 0.238$ & $2.400 \pm 0.109$ \\
		& w/o congestion weighting & $1460.8 \pm 12.5$ & $17.26 \pm 0.17$ & $\mathbf{5.637 \pm 0.160}$ & $4.519 \pm 0.299$ & $2.399 \pm 0.118$ \\
		& w/o delay ranking & $1581.6 \pm 8.8$ & $19.18 \pm 0.07$ & $5.844 \pm 0.098$ & $4.217 \pm 0.243$ & $\mathbf{2.332 \pm 0.100}$ \\
		\midrule
		\textbf{San Francisco}  & \textbf{SPARE} & $\mathbf{303.2 \pm 3.3}$ & $\mathbf{4.51 \pm 0.05}$ & $\mathbf{1.298 \pm 0.037}$ & $\mathbf{1.199 \pm 0.027}$ & $\mathbf{0.806 \pm 0.024}$ \\
		& w/o periodic rerouting & $310.3 \pm 3.9$ & $4.60 \pm 0.06$ & $1.343 \pm 0.042$ & $1.233 \pm 0.030$ & $0.837 \pm 0.024$ \\
		& w/o congestion weighting & $309.7 \pm 3.6$ & $4.59 \pm 0.05$ & $1.340 \pm 0.043$ & $1.232 \pm 0.028$ & $0.832 \pm 0.025$ \\
		& w/o delay ranking & $306.2 \pm 1.7$ & $4.56 \pm 0.02$ & $1.313 \pm 0.035$ & $1.218 \pm 0.016$ & $0.819 \pm 0.008$ \\
		\bottomrule
	\end{tabular}
\end{table*}
\subsection{Ablation Study}
\label{sec:ablation}

We use a nested ablation at $\rho=1$ in all three cities under matched OD
draws. \textbf{w/o periodic rerouting} retains the initial paths,
\textbf{w/o congestion weighting} uses free-flow costs, and \textbf{w/o delay
ranking} samples uniformly from the same positive-delay set and budget; these
variants isolate online adaptation, routing, and vehicle selection,
respectively. As shown in Table~\ref{tab:ablation-four-variants}, periodic
rerouting and congestion weighting drive the Manhattan gains, improve
efficiency and both SI metrics in Chicago, and improve all five metrics in San
Francisco. Delay ranking improves every metric except Manhattan Origin SI and
Chicago Destination SI, including all five metrics in San Francisco.
These results show that rerouting alone is insufficient: its gains depend on
where reroutes are sent and which vehicles receive them.
Together, the three components deliver the strongest joint
efficiency--fairness profile by adapting online, avoiding congested corridors,
and directing the limited budget to the most delayed vehicles.

\subsection{Parameter Sensitivity}
\label{sec:budget}

At $\rho=1$, we sweep $K$ with $R=400$ and $R$ with $K=15$.
Figure~\ref{fig:paramsens} reports Manhattan, while Appendix
Figures~\ref{fig:app-paramsens-chicago} and~\ref{fig:app-paramsens-sf} report
Chicago and San Francisco.
All three figures show overhead and per-run CPU time.

\subsubsection{Review Interval $K$.}
With $R=400$, we sweep $K$ from frequent to infrequent review. As shown in
Figure~\ref{fig:paramsens} and Appendix
Figures~\ref{fig:app-paramsens-chicago} and~\ref{fig:app-paramsens-sf}, Manhattan has an interior
low-overhead region, while Chicago's overhead changes modestly as CPU time
falls with less frequent review. San Francisco likewise reaches its lowest
overhead at an intermediate interval rather than at the most frequent review.
Thus, the effect of $K$ is city-dependent and non-monotonic: reviewing too
frequently can react before a stable congestion signal forms, whereas sparse
reviews can leave established bottlenecks uncorrected.

\subsubsection{Reroute Budget $R$.}
With $K=15$, we sweep $R$ from no rerouting to large per-review budgets. As
shown in Figure~\ref{fig:paramsens} and Appendix
Figures~\ref{fig:app-paramsens-chicago} and~\ref{fig:app-paramsens-sf}, Manhattan exhibits diminishing
overhead returns, and Chicago obtains a smaller but continuing reduction as
$R$ grows. San Francisco's overhead also continues to decrease through the
largest tested budget, $R=3000$; its curve does not support an intermediate
saturation claim. The highest-delay vehicles are selected first, so the
marginal value of additional reroutes can vary by city and budget.
Together, the sweeps establish $K$ and $R$ as city-dependent controls over the
delay--CPU trade-off.

\begin{figure}[t]
  \centering
  \includegraphics[width=\linewidth]{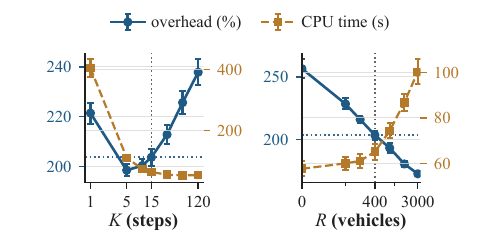}
  \Description{Two side-by-side Manhattan panels show SPARE's sensitivity to review interval K with R fixed at 400 and reroute budget R with K fixed at 15. Blue curves show overhead and brown dashed curves show process CPU time. Dotted crosshairs identify the fixed reference setting, while the ordinary curve markers show its observations.}
  \vspace{-2.5em}
  \caption{Parameter sensitivity on Manhattan (mean $\pm$ SD over ten
  subsamples).}
  \label{fig:paramsens}
\end{figure}

  \subsection{Case Studies}
  \label{sec:case}

  To test whether SPARE's aggregate gains are shared rather than shifted onto a small subset of trips, we compare trip-level slowdown inequality against representative baselines: GSP, FAIR-RD, and GLC.
  
  Three metrics measure how evenly slowdown is shared among trips. The \textbf{Gini coefficient} is low when slowdowns are similar and rises when a minority bears a disproportionate burden; the \textbf{coefficient of variation (CoV)} is trip-to-trip spread relative to the fleet mean; and the \textbf{Theil index} is more sensitive to a small set of exceptionally delayed trips. Two metrics capture its geographic structure: \textbf{B-Theil} is the total-Theil component due to differences between origin-region means rather than within regions, and \textbf{W/B} divides the worst eligible origin-region mean by the best (e.g., W/B$=2$ means twice the average slowdown).
  Table~\ref{tab:individual} shows that SPARE is lowest on all five metrics. Its Gini and CoV are about $25\%$ below the strongest competing results; the larger Theil ($43\%$) and B-Theil ($55\%$) reductions indicate fewer extreme trip-level burdens and systematic origin-region gaps. W/B falls from $2.44$ to $1.99$, bringing the worst-served origin region closer to the best.

  \begin{table}[!t]
  \centering
  \small
  \caption{Trip-level slowdown inequality on Manhattan at $\rho=1$. B-Theil denotes the between-origin-region Theil component, and W/B denotes the worst-to-best origin-region mean-slowdown ratio.}
  \label{tab:individual}
  \setlength{\tabcolsep}{2pt}
  \renewcommand{\arraystretch}{1.04}
  \newcommand{\casestd}[1]{\mathbin{\pm}{\scriptstyle #1}}
  \begin{tabular}{lrrrrr}
  \toprule
  \textbf{Method} & \textbf{Gini} & \textbf{CoV} & \textbf{Theil} & \textbf{B-Theil} & \textbf{W/B} \\
  \midrule
  GSP & $0.368 \casestd{0.005}$ & $0.707 \casestd{0.015}$ & $0.217 \casestd{0.007}$ & $0.020 \casestd{0.003}$ & $2.52 \casestd{0.12}$ \\
  FAIR-RD & $0.383 \casestd{0.005}$ & $0.824 \casestd{0.022}$ & $0.254 \casestd{0.009}$ & $0.026 \casestd{0.003}$ & $3.42 \casestd{0.53}$ \\
  GLC & $0.365 \casestd{0.007}$ & $0.762 \casestd{0.024}$ & $0.227 \casestd{0.011}$ & $0.021 \casestd{0.002}$ & $2.44 \casestd{0.22}$ \\
  \rowcolor{gray!20}
  \textbf{SPARE} & $\mathbf{0.275} \casestd{0.003}$ & $\mathbf{0.533} \casestd{0.005}$ & $\mathbf{0.124} \casestd{0.002}$ & $\mathbf{0.009} \casestd{0.001}$ & $\mathbf{1.99} \casestd{0.19}$ \\
  \bottomrule
  \end{tabular}
  \end{table}

\section{Conclusion}
\label{sec:conclusion}
Across three distinct datasets combining real-city road networks and taxi demand, we uncover context-dependent trip-length inequity and demand-amplified spatial inequity, particularly across origin regions. Against six representative baselines, SPARE consistently delivers the strongest joint efficiency--fairness performance through scalable, budgeted, congestion-responsive rerouting without full-fleet replanning.

\section*{Limitations and Ethical Considerations}
\label{sec:limitations}

We use public taxi records under their release terms and report only aggregate
results without personal identifiers. Historical demand may encode service
bias, and our metrics capture operational rather than demographic fairness.
Simultaneous release and lane-based capacities abstract time-varying traffic.
Deployment requires local calibration, privacy and safety review, and
continued fairness monitoring.

\section*{Generative AI Usage}

GPT and Claude were used for language polishing.

\clearpage
\appendix

\setcounter{dbltopnumber}{2}
\renewcommand{\dbltopfraction}{0.95}
\renewcommand{\dblfloatpagefraction}{0.70}

\section{Additional Spatial-Diagnostic Figures}
\label{app:spatial-figures}

Figure~\ref{fig:map} shows GLC across all three cities, and
Figure~\ref{fig:density-onset} gives the three-city endpoint-by-method trend.
Figures~\ref{fig:app-spatial-maps-sp} and~\ref{fig:app-spatial-maps-pibt}
extend the origin-side maps to two additional baselines, with the same
Manhattan, Chicago, and San Francisco columns in both figures.

\begin{figure}[H]
\centering
\includegraphics[width=\linewidth]{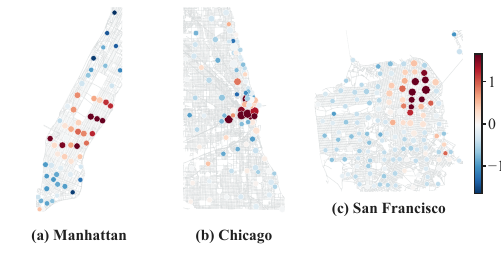}
\Description{Three aligned origin-disparity maps show SP in Manhattan,
Chicago, and San Francisco. Colors are within-panel standardized
length-adjusted slowdown.}
\caption{Origin-disparity maps under SP at $\rho=1$ across
Manhattan, Chicago, and San Francisco. Colors are standardized within each
panel, so they compare spatial patterns rather than absolute magnitudes.}
\label{fig:app-spatial-maps-sp}
\end{figure}

\begin{figure}[H]
\centering
\includegraphics[width=\linewidth]{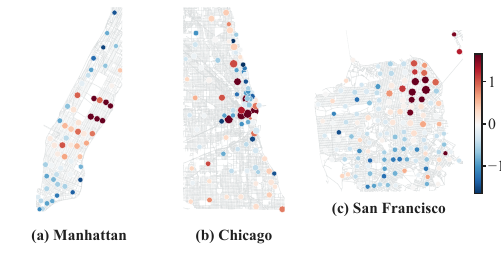}
\Description{Three aligned origin-disparity maps show PIBT in Manhattan,
Chicago, and San Francisco. Colors are within-panel standardized
length-adjusted slowdown.}
\caption{Origin-disparity maps under PIBT at $\rho=1$ across
Manhattan, Chicago, and San Francisco. Colors are standardized within each
panel, so they compare spatial patterns rather than absolute magnitudes.}
\label{fig:app-spatial-maps-pibt}
\end{figure}

\newcommand{\appendixratiotables}{\begin{table}[!t]
\centering
\caption{Cross-city method comparison at $\rho=1/4$. Each city resolves $N=\rho|V|$ on its own graph. Entries are mean $\pm$ population standard deviation over ten fixed demand subsamples; lower is better. \textsc{OOT} and \textsc{INC} denote timeout and incomplete execution.}
\label{tab:app-ratio-1-4}
\label{tab:app-full-matrix}
\begingroup
\scriptsize
\renewcommand{\arraystretch}{0.74}
\setlength{\tabcolsep}{0.75pt}
\let\tablepm\pm
\renewcommand{\pm}{\mathord{\mkern-1mu{\scriptstyle\tablepm}\mkern-1mu}\scriptstyle}
\resizebox{\textwidth}{!}{\begin{tabular}{@{}llrrrrrrrrr@{}}
\toprule
\textbf{City} & \textbf{Method} & \shortstack{Over.\\(\%)} & \shortstack{Mean\\slowdown} & \shortstack{p99\\slowdown} & \shortstack{Origin\\SI} & \shortstack{Destination\\SI} & \shortstack{LI\\10--1} & \shortstack{LI\\8--2} & LGD & \shortstack{CPU\\time (s)} \\
\midrule
\multirow{9}{*}{\rotatebox{90}{Manhattan}} & SP & $66.8\pm4.8$ & $1.55\pm0.04$ & $3.46\pm0.14$ & $0.243\pm0.033$ & $0.101\pm0.019$ & $0.797\pm0.101$ & $0.497\pm0.079$ & $0.282\pm0.032$ & $0.8\pm0.2$ \\
 & GSP & $59.1\pm3.9$ & $1.48\pm0.03$ & $3.31\pm0.14$ & $0.212\pm0.027$ & $0.099\pm0.028$ & $0.751\pm0.080$ & $0.452\pm0.076$ & $0.261\pm0.028$ & \best{0.4\pm0.1} \\
 & TAP & $98.2\pm5.0$ & $1.88\pm0.04$ & $4.49\pm0.28$ & $0.272\pm0.062$ & $0.185\pm0.043$ & $0.690\pm0.118$ & $0.590\pm0.070$ & $0.278\pm0.028$ & $362.5\pm134.4$ \\
 & PIBT & $52.6\pm3.4$ & $1.58\pm0.04$ & $4.46\pm0.38$ & $0.227\pm0.036$ & $0.144\pm0.027$ & $-0.667\pm0.085$ & \best{-0.009\pm0.053} & $0.201\pm0.028$ & $2.7\pm1.0$ \\
 & G-PIBT & \best{46.4\pm1.2} & \best{1.44\pm0.01} & \best{2.65\pm0.12} & \best{0.110\pm0.023} & \best{0.084\pm0.016} & $0.075\pm0.072$ & $0.234\pm0.053$ & \best{0.082\pm0.010} & $321.4\pm101.6$ \\
 & FAIR-RD & $68.0\pm2.7$ & $1.87\pm0.03$ & $6.04\pm0.45$ & $0.279\pm0.060$ & $0.200\pm0.035$ & $-1.830\pm0.172$ & $-0.424\pm0.075$ & $0.475\pm0.045$ & $3.2\pm1.5$ \\
 & GLC & $58.0\pm3.3$ & $1.54\pm0.03$ & $4.14\pm0.28$ & $0.191\pm0.043$ & $0.116\pm0.028$ & $0.161\pm0.060$ & $0.379\pm0.098$ & $0.133\pm0.019$ & $4.8\pm1.7$ \\
 & SPARE ($K=15,R=400$) & $53.4\pm1.9$ & $1.50\pm0.02$ & $3.28\pm0.15$ & $0.163\pm0.035$ & $0.100\pm0.018$ & $0.104\pm0.065$ & $0.299\pm0.045$ & $0.109\pm0.010$ & $7.2\pm3.7$ \\
 & SPARE ($K=5,R=3000$) & $50.7\pm1.5$ & $1.49\pm0.01$ & $2.97\pm0.11$ & $0.144\pm0.026$ & $0.093\pm0.016$ & \best{0.038\pm0.069} & $0.256\pm0.078$ & $0.095\pm0.015$ & $19.8\pm6.1$ \\
\midrule
\multirow{9}{*}{\rotatebox{90}{Chicago}} & SP & $695.7\pm20.6$ & $8.28\pm0.24$ & $26.62\pm1.07$ & $2.431\pm0.114$ & $1.111\pm0.204$ & \best{-0.615\pm0.558} & \best{0.354\pm0.926} & $1.839\pm0.140$ & $11.2\pm2.6$ \\
 & GSP & $620.1\pm16.2$ & $7.23\pm0.14$ & $23.38\pm0.61$ & $2.214\pm0.112$ & $1.073\pm0.156$ & $1.131\pm0.445$ & $1.364\pm0.993$ & $2.042\pm0.114$ & \best{5.9\pm1.8} \\
 & TAP & $783.5\pm78.9$ & $9.73\pm0.69$ & $38.41\pm1.62$ & $2.246\pm0.354$ & $1.010\pm0.131$ & $-2.509\pm1.018$ & $-5.932\pm1.876$ & $2.364\pm0.323$ & $8320.7\pm3205.5$ \\
 & PIBT & $370.1\pm4.8$ & $5.35\pm0.07$ & $21.14\pm0.61$ & \best{1.024\pm0.088} & $0.787\pm0.068$ & $-3.988\pm0.429$ & $-1.590\pm0.515$ & $1.668\pm0.108$ & $57.1\pm12.6$ \\
 & G-PIBT & \textsc{OOT} & \textsc{OOT} & \textsc{OOT} & \textsc{OOT} & \textsc{OOT} & \textsc{OOT} & \textsc{OOT} & \textsc{OOT} & \textsc{OOT} \\
 & FAIR-RD & $402.8\pm8.0$ & $5.96\pm0.10$ & $23.38\pm1.12$ & $1.309\pm0.093$ & $1.063\pm0.059$ & $-5.879\pm0.365$ & $-1.553\pm0.451$ & $2.260\pm0.129$ & $56.1\pm12.9$ \\
 & GLC & $352.3\pm5.4$ & $4.86\pm0.06$ & $18.66\pm0.33$ & $1.180\pm0.083$ & $0.644\pm0.077$ & $-1.472\pm0.268$ & $-0.628\pm0.469$ & \best{1.354\pm0.120} & $54.1\pm17.1$ \\
 & SPARE ($K=15,R=400$) & $349.4\pm5.2$ & $4.84\pm0.06$ & $18.71\pm0.43$ & $1.179\pm0.081$ & $0.630\pm0.071$ & $-1.517\pm0.262$ & $-0.615\pm0.453$ & $1.368\pm0.119$ & $187.9\pm47.5$ \\
 & SPARE ($K=5,R=3000$) & \best{342.3\pm4.9} & \best{4.77\pm0.05} & \best{18.51\pm0.31} & $1.149\pm0.075$ & \best{0.601\pm0.060} & $-1.555\pm0.255$ & $-0.690\pm0.468$ & $1.368\pm0.122$ & $767.1\pm164.3$ \\
\midrule
\multirow{9}{*}{\rotatebox{90}{San Francisco}} & SP & $125.6\pm11.6$ & $2.26\pm0.12$ & $6.99\pm0.93$ & $0.599\pm0.065$ & $0.282\pm0.028$ & $0.447\pm0.118$ & $0.489\pm0.307$ & $0.429\pm0.048$ & $3.6\pm1.2$ \\
 & GSP & $115.6\pm10.7$ & $2.18\pm0.12$ & $7.36\pm0.72$ & $0.560\pm0.061$ & $0.264\pm0.027$ & $0.326\pm0.100$ & $0.339\pm0.223$ & $0.397\pm0.055$ & \best{1.2\pm0.4} \\
 & TAP & $188.9\pm10.2$ & $2.78\pm0.09$ & $8.12\pm0.76$ & $0.582\pm0.053$ & $0.256\pm0.036$ & $0.964\pm0.264$ & $0.962\pm0.354$ & $0.386\pm0.067$ & $1310.9\pm345.8$ \\
 & PIBT & $77.7\pm17.7$ & $2.00\pm0.23$ & $7.84\pm3.11$ & $0.357\pm0.096$ & $0.258\pm0.094$ & $-1.146\pm0.199$ & $-1.011\pm0.373$ & $0.480\pm0.132$ & $15.6\pm5.3$ \\
 & G-PIBT & $83.0\pm3.1$ & $1.86\pm0.04$ & \best{5.09\pm0.22} & \best{0.280\pm0.015} & $0.177\pm0.009$ & $0.197\pm0.072$ & \best{0.066\pm0.144} & \best{0.224\pm0.030} & $5244.1\pm1567.1$ \\
 & FAIR-RD & $89.0\pm17.6$ & $2.29\pm0.25$ & $8.84\pm3.26$ & $0.396\pm0.101$ & $0.327\pm0.096$ & $-2.324\pm0.302$ & $-1.985\pm0.549$ & $0.863\pm0.186$ & $10.7\pm4.9$ \\
 & GLC & $78.5\pm14.8$ & $1.87\pm0.18$ & $6.03\pm1.72$ & $0.333\pm0.080$ & $0.197\pm0.067$ & $-0.050\pm0.104$ & $-0.149\pm0.250$ & $0.321\pm0.108$ & $15.8\pm6.4$ \\
 & SPARE ($K=15,R=400$) & \best{72.4\pm2.4} & \best{1.80\pm0.03} & $5.23\pm0.21$ & $0.299\pm0.015$ & \best{0.173\pm0.012} & \best{-0.042\pm0.084} & $-0.073\pm0.143$ & $0.281\pm0.032$ & $20.8\pm7.6$ \\
 & SPARE ($K=5,R=3000$) & $76.1\pm14.5$ & $1.84\pm0.17$ & $5.68\pm1.72$ & $0.321\pm0.077$ & $0.190\pm0.063$ & $-0.056\pm0.099$ & $-0.165\pm0.248$ & $0.306\pm0.105$ & $87.9\pm24.7$ \\
\bottomrule
\end{tabular}}
\endgroup
\end{table}

\begin{table}[!t]
\centering
\caption{Cross-city method comparison at $\rho=2/4$. Each city resolves $N=\rho|V|$ on its own graph. Entries are mean $\pm$ population standard deviation over ten fixed demand subsamples; lower is better. \textsc{OOT} and \textsc{INC} denote timeout and incomplete execution.}
\label{tab:app-ratio-2-4}
\begingroup
\scriptsize
\renewcommand{\arraystretch}{0.74}
\setlength{\tabcolsep}{0.75pt}
\let\tablepm\pm
\renewcommand{\pm}{\mathord{\mkern-1mu{\scriptstyle\tablepm}\mkern-1mu}\scriptstyle}
\resizebox{\textwidth}{!}{\begin{tabular}{@{}llrrrrrrrrr@{}}
\toprule
\textbf{City} & \textbf{Method} & \shortstack{Over.\\(\%)} & \shortstack{Mean\\slowdown} & \shortstack{p99\\slowdown} & \shortstack{Origin\\SI} & \shortstack{Destination\\SI} & \shortstack{LI\\10--1} & \shortstack{LI\\8--2} & LGD & \shortstack{CPU\\time (s)} \\
\midrule
\multirow{9}{*}{\rotatebox{90}{Manhattan}} & SP & $159.7\pm7.8$ & $2.32\pm0.06$ & $6.17\pm0.30$ & $0.565\pm0.073$ & $0.225\pm0.047$ & $1.863\pm0.219$ & $1.183\pm0.174$ & $0.652\pm0.068$ & $2.0\pm0.5$ \\
 & GSP & $139.5\pm6.7$ & $2.14\pm0.05$ & $5.87\pm0.21$ & $0.488\pm0.056$ & $0.194\pm0.049$ & $1.712\pm0.216$ & $1.054\pm0.164$ & $0.604\pm0.058$ & \best{0.9\pm0.3} \\
 & TAP & $152.7\pm7.8$ & $2.37\pm0.06$ & $6.09\pm0.72$ & $0.473\pm0.075$ & $0.292\pm0.083$ & $1.037\pm0.190$ & $0.849\pm0.200$ & $0.401\pm0.060$ & $646.7\pm234.7$ \\
 & PIBT & $116.3\pm5.5$ & $2.26\pm0.06$ & $7.87\pm0.55$ & $0.524\pm0.107$ & $0.254\pm0.038$ & $-1.173\pm0.185$ & \best{-0.067\pm0.045} & $0.360\pm0.047$ & $8.8\pm1.9$ \\
 & G-PIBT & \best{80.9\pm1.1} & \best{1.78\pm0.01} & \best{3.70\pm0.10} & \best{0.246\pm0.020} & \best{0.131\pm0.015} & $0.075\pm0.088$ & $0.338\pm0.040$ & \best{0.115\pm0.011} & $978.5\pm282.6$ \\
 & FAIR-RD & $144.6\pm4.1$ & $2.79\pm0.05$ & $10.91\pm0.67$ & $0.662\pm0.113$ & $0.387\pm0.056$ & $-3.372\pm0.277$ & $-0.724\pm0.131$ & $0.865\pm0.084$ & $13.7\pm5.8$ \\
 & GLC & $125.3\pm5.0$ & $2.17\pm0.04$ & $7.35\pm0.33$ & $0.483\pm0.065$ & $0.222\pm0.029$ & $0.473\pm0.142$ & $0.715\pm0.142$ & $0.259\pm0.035$ & $14.3\pm6.2$ \\
 & SPARE ($K=15,R=400$) & $103.3\pm1.9$ & $2.00\pm0.02$ & $5.12\pm0.16$ & $0.363\pm0.041$ & $0.175\pm0.020$ & $0.145\pm0.111$ & $0.410\pm0.084$ & $0.148\pm0.016$ & $17.6\pm7.4$ \\
 & SPARE ($K=5,R=3000$) & $90.6\pm2.1$ & $1.88\pm0.02$ & $4.19\pm0.15$ & $0.301\pm0.027$ & $0.145\pm0.019$ & \best{0.043\pm0.110} & $0.366\pm0.087$ & $0.124\pm0.019$ & $55.8\pm27.7$ \\
\midrule
\multirow{9}{*}{\rotatebox{90}{Chicago}} & SP & $1478.1\pm32.2$ & $16.58\pm0.29$ & $52.68\pm1.68$ & $4.998\pm0.129$ & $1.819\pm0.231$ & \best{-1.989\pm1.089} & \best{-0.424\pm2.095} & $3.781\pm0.229$ & $34.5\pm13.0$ \\
 & GSP & $1397.5\pm28.4$ & $14.96\pm0.19$ & $46.18\pm0.52$ & $4.746\pm0.090$ & $2.078\pm0.204$ & $2.891\pm0.971$ & $2.753\pm1.006$ & $3.958\pm0.233$ & \best{26.7\pm10.8} \\
 & TAP & \textsc{OOT} & \textsc{OOT} & \textsc{OOT} & \textsc{OOT} & \textsc{OOT} & \textsc{OOT} & \textsc{OOT} & \textsc{OOT} & \textsc{OOT} \\
 & PIBT & $753.4\pm12.0$ & $9.84\pm0.17$ & $41.10\pm1.18$ & \best{1.967\pm0.111} & $1.524\pm0.106$ & $-7.950\pm0.497$ & $-3.700\pm0.525$ & $3.347\pm0.117$ & $322.6\pm101.6$ \\
 & G-PIBT & \textsc{OOT} & \textsc{OOT} & \textsc{OOT} & \textsc{OOT} & \textsc{OOT} & \textsc{OOT} & \textsc{OOT} & \textsc{OOT} & \textsc{OOT} \\
 & FAIR-RD & $806.1\pm10.4$ & $10.76\pm0.12$ & $42.96\pm1.75$ & $2.572\pm0.138$ & $1.996\pm0.111$ & $-10.631\pm0.469$ & $-3.009\pm0.642$ & $4.207\pm0.134$ & $232.5\pm88.0$ \\
 & GLC & $716.5\pm9.8$ & $8.92\pm0.12$ & $36.72\pm1.01$ & $2.265\pm0.107$ & $1.226\pm0.104$ & $-3.450\pm0.353$ & $-2.094\pm0.613$ & $2.821\pm0.130$ & $232.0\pm78.6$ \\
 & SPARE ($K=15,R=400$) & $708.4\pm9.6$ & $8.85\pm0.12$ & $36.74\pm1.01$ & $2.195\pm0.111$ & $1.186\pm0.097$ & $-3.469\pm0.366$ & $-2.234\pm0.574$ & \best{2.812\pm0.131} & $568.7\pm212.8$ \\
 & SPARE ($K=5,R=3000$) & \best{695.5\pm9.3} & \best{8.75\pm0.10} & \best{36.59\pm0.93} & $2.114\pm0.130$ & \best{1.165\pm0.089} & $-3.623\pm0.312$ & $-2.255\pm0.618$ & $2.825\pm0.130$ & $2578.2\pm620.7$ \\
\midrule
\multirow{9}{*}{\rotatebox{90}{San Francisco}} & SP & $306.9\pm12.1$ & $4.07\pm0.13$ & $14.74\pm1.14$ & $1.373\pm0.070$ & $0.665\pm0.042$ & $1.178\pm0.129$ & $0.866\pm0.372$ & $0.919\pm0.050$ & $7.2\pm2.7$ \\
 & GSP & $286.8\pm11.3$ & $3.93\pm0.13$ & $14.66\pm0.84$ & $1.320\pm0.051$ & $0.611\pm0.051$ & $0.916\pm0.123$ & $0.734\pm0.389$ & $0.919\pm0.066$ & \best{3.5\pm1.3} \\
 & TAP & $338.1\pm23.9$ & $4.40\pm0.26$ & $14.76\pm1.77$ & $1.267\pm0.126$ & $0.504\pm0.056$ & $1.072\pm0.522$ & $0.980\pm0.652$ & $0.840\pm0.141$ & $3411.2\pm707.6$ \\
 & PIBT & $152.4\pm2.3$ & $2.93\pm0.03$ & $12.53\pm0.31$ & $0.637\pm0.015$ & $0.461\pm0.014$ & $-2.197\pm0.146$ & $-1.710\pm0.170$ & $0.877\pm0.043$ & $48.7\pm15.1$ \\
 & G-PIBT & \textsc{OOT} & \textsc{OOT} & \textsc{OOT} & \textsc{OOT} & \textsc{OOT} & \textsc{OOT} & \textsc{OOT} & \textsc{OOT} & \textsc{OOT} \\
 & FAIR-RD & $169.3\pm2.5$ & $3.38\pm0.04$ & $14.14\pm0.29$ & $0.678\pm0.027$ & $0.558\pm0.020$ & $-3.987\pm0.162$ & $-3.020\pm0.178$ & $1.444\pm0.055$ & $28.9\pm11.9$ \\
 & GLC & $150.7\pm3.0$ & $2.71\pm0.04$ & $9.82\pm0.30$ & $0.616\pm0.019$ & $0.387\pm0.021$ & \best{-0.228\pm0.106} & $-0.582\pm0.167$ & $0.636\pm0.037$ & $34.8\pm10.9$ \\
 & SPARE ($K=15,R=400$) & $147.5\pm2.9$ & $2.67\pm0.04$ & $9.52\pm0.25$ & $0.601\pm0.019$ & $0.375\pm0.017$ & $-0.237\pm0.108$ & $-0.614\pm0.156$ & $0.617\pm0.040$ & $69.2\pm20.0$ \\
 & SPARE ($K=5,R=3000$) & \best{142.1\pm2.6} & \best{2.61\pm0.04} & \best{9.26\pm0.28} & \best{0.580\pm0.018} & \best{0.357\pm0.017} & $-0.276\pm0.135$ & \best{-0.563\pm0.147} & \best{0.600\pm0.033} & $243.4\pm80.4$ \\
\bottomrule
\end{tabular}}
\endgroup
\end{table}

\begin{table}[!t]
\centering
\caption{Cross-city method comparison at $\rho=3/4$. Each city resolves $N=\rho|V|$ on its own graph. Entries are mean $\pm$ population standard deviation over ten fixed demand subsamples; lower is better. \textsc{OOT} and \textsc{INC} denote timeout and incomplete execution.}
\label{tab:app-ratio-3-4}
\begingroup
\scriptsize
\renewcommand{\arraystretch}{0.74}
\setlength{\tabcolsep}{0.75pt}
\let\tablepm\pm
\renewcommand{\pm}{\mathord{\mkern-1mu{\scriptstyle\tablepm}\mkern-1mu}\scriptstyle}
\resizebox{\textwidth}{!}{\begin{tabular}{@{}llrrrrrrrrr@{}}
\toprule
\textbf{City} & \textbf{Method} & \shortstack{Over.\\(\%)} & \shortstack{Mean\\slowdown} & \shortstack{p99\\slowdown} & \shortstack{Origin\\SI} & \shortstack{Destination\\SI} & \shortstack{LI\\10--1} & \shortstack{LI\\8--2} & LGD & \shortstack{CPU\\time (s)} \\
\midrule
\multirow{9}{*}{\rotatebox{90}{Manhattan}} & SP & $260.4\pm6.8$ & $3.17\pm0.05$ & $9.14\pm0.36$ & $0.943\pm0.060$ & $0.314\pm0.066$ & $2.945\pm0.195$ & $1.840\pm0.183$ & $1.019\pm0.063$ & $3.2\pm0.4$ \\
 & GSP & $225.7\pm6.7$ & $2.86\pm0.05$ & $8.52\pm0.35$ & $0.756\pm0.049$ & $0.275\pm0.063$ & $2.694\pm0.196$ & $1.648\pm0.206$ & $0.946\pm0.066$ & \best{1.9\pm0.4} \\
 & TAP & $213.3\pm15.3$ & $2.91\pm0.11$ & $7.40\pm0.70$ & $0.698\pm0.092$ & $0.332\pm0.127$ & $1.545\pm0.347$ & $1.292\pm0.249$ & $0.576\pm0.115$ & $604.2\pm123.8$ \\
 & PIBT & $183.3\pm6.1$ & $2.97\pm0.06$ & $12.01\pm0.52$ & $0.780\pm0.090$ & $0.408\pm0.049$ & $-1.777\pm0.226$ & \best{-0.004\pm0.144} & $0.534\pm0.063$ & $24.9\pm4.6$ \\
 & G-PIBT & \best{113.7\pm0.7} & \best{2.12\pm0.01} & \best{4.68\pm0.07} & \best{0.348\pm0.030} & \best{0.168\pm0.017} & $0.071\pm0.062$ & $0.343\pm0.054$ & $0.130\pm0.014$ & $2110.9\pm796.8$ \\
 & FAIR-RD & $221.4\pm6.4$ & $3.68\pm0.06$ & $15.23\pm0.81$ & $1.004\pm0.128$ & $0.545\pm0.064$ & $-4.617\pm0.189$ & $-1.059\pm0.193$ & $1.172\pm0.051$ & $27.9\pm8.2$ \\
 & GLC & $192.8\pm5.9$ & $2.82\pm0.05$ & $10.63\pm0.50$ & $0.736\pm0.076$ & $0.348\pm0.046$ & $0.694\pm0.140$ & $0.926\pm0.175$ & $0.347\pm0.038$ & $22.5\pm5.6$ \\
 & SPARE ($K=15,R=400$) & $153.2\pm2.2$ & $2.51\pm0.02$ & $7.30\pm0.27$ & $0.495\pm0.062$ & $0.238\pm0.033$ & $0.147\pm0.087$ & $0.408\pm0.145$ & $0.152\pm0.029$ & $37.5\pm13.6$ \\
 & SPARE ($K=5,R=3000$) & $127.2\pm1.9$ & $2.26\pm0.02$ & $5.59\pm0.13$ & $0.396\pm0.034$ & $0.187\pm0.023$ & \best{-0.014\pm0.066} & $0.305\pm0.108$ & \best{0.118\pm0.022} & $120.3\pm44.3$ \\
\midrule
\multirow{9}{*}{\rotatebox{90}{Chicago}} & SP & $2292.3\pm64.3$ & $25.07\pm0.71$ & $78.42\pm2.97$ & $7.707\pm0.298$ & $2.332\pm0.362$ & \best{-2.481\pm1.339} & \best{-1.456\pm2.220} & $5.409\pm0.248$ & \best{53.9\pm21.8} \\
 & GSP & $2246.2\pm46.4$ & $23.28\pm0.45$ & $69.31\pm1.66$ & $7.288\pm0.231$ & $3.005\pm0.451$ & $5.053\pm1.181$ & $4.908\pm0.663$ & $6.061\pm0.289$ & $55.7\pm19.9$ \\
 & TAP & \textsc{OOT} & \textsc{OOT} & \textsc{OOT} & \textsc{OOT} & \textsc{OOT} & \textsc{OOT} & \textsc{OOT} & \textsc{OOT} & \textsc{OOT} \\
 & PIBT & $1141.1\pm12.5$ & $14.41\pm0.16$ & $61.16\pm1.75$ & \best{2.840\pm0.100} & $2.266\pm0.124$ & $-12.261\pm0.740$ & $-5.737\pm0.370$ & $5.019\pm0.144$ & $867.9\pm291.5$ \\
 & G-PIBT & \textsc{OOT} & \textsc{OOT} & \textsc{OOT} & \textsc{OOT} & \textsc{OOT} & \textsc{OOT} & \textsc{OOT} & \textsc{OOT} & \textsc{OOT} \\
 & FAIR-RD & $1211.2\pm11.6$ & $15.59\pm0.18$ & $63.99\pm2.43$ & $3.746\pm0.151$ & $2.949\pm0.158$ & $-15.365\pm0.725$ & $-4.695\pm0.707$ & $5.908\pm0.198$ & $455.7\pm150.0$ \\
 & GLC & $1087.1\pm9.8$ & $13.08\pm0.12$ & \best{54.46\pm1.36} & $3.379\pm0.104$ & $1.895\pm0.155$ & $-5.770\pm0.522$ & $-3.413\pm0.432$ & $4.177\pm0.155$ & $665.6\pm194.8$ \\
 & SPARE ($K=15,R=400$) & $1071.6\pm10.7$ & $12.93\pm0.13$ & $54.72\pm1.52$ & $3.188\pm0.148$ & $1.829\pm0.137$ & $-5.837\pm0.497$ & $-3.537\pm0.485$ & \best{4.162\pm0.160} & $1165.6\pm347.1$ \\
 & SPARE ($K=5,R=3000$) & \best{1061.8\pm11.3} & \best{12.86\pm0.15} & $55.10\pm1.37$ & $3.125\pm0.095$ & \best{1.801\pm0.131} & $-5.951\pm0.605$ & $-3.569\pm0.481$ & $4.169\pm0.170$ & $6050.2\pm1367.8$ \\
\midrule
\multirow{9}{*}{\rotatebox{90}{San Francisco}} & SP & $505.0\pm17.0$ & $6.04\pm0.18$ & $21.99\pm1.22$ & $2.188\pm0.107$ & $1.094\pm0.059$ & $1.911\pm0.198$ & $1.524\pm0.349$ & $1.435\pm0.102$ & $14.0\pm3.6$ \\
 & GSP & $472.4\pm14.4$ & $5.79\pm0.17$ & $21.81\pm1.24$ & $2.114\pm0.083$ & $0.984\pm0.050$ & $1.561\pm0.172$ & $1.420\pm0.488$ & $1.450\pm0.114$ & \best{6.8\pm1.9} \\
 & TAP & $493.4\pm36.7$ & $6.07\pm0.39$ & $22.34\pm2.06$ & $2.012\pm0.211$ & $0.859\pm0.123$ & $1.229\pm0.504$ & \best{0.650\pm0.978} & $1.323\pm0.222$ & $4505.8\pm1360.5$ \\
 & PIBT & $235.9\pm3.6$ & $3.97\pm0.05$ & $18.38\pm0.43$ & $0.947\pm0.027$ & $0.706\pm0.032$ & $-3.423\pm0.213$ & $-2.361\pm0.177$ & $1.312\pm0.055$ & $114.4\pm33.1$ \\
 & G-PIBT & \textsc{OOT} & \textsc{OOT} & \textsc{OOT} & \textsc{OOT} & \textsc{OOT} & \textsc{OOT} & \textsc{OOT} & \textsc{OOT} & \textsc{OOT} \\
 & FAIR-RD & $254.8\pm2.5$ & $4.53\pm0.04$ & $20.58\pm0.26$ & $0.984\pm0.034$ & $0.810\pm0.025$ & $-5.704\pm0.257$ & $-4.066\pm0.159$ & $2.028\pm0.059$ & $69.7\pm24.9$ \\
 & GLC & $230.8\pm3.2$ & $3.66\pm0.04$ & $14.44\pm0.36$ & $0.937\pm0.022$ & $0.619\pm0.024$ & \best{-0.685\pm0.169} & $-1.178\pm0.128$ & $0.978\pm0.036$ & $85.1\pm26.3$ \\
 & SPARE ($K=15,R=400$) & $225.7\pm2.9$ & $3.60\pm0.04$ & $14.08\pm0.43$ & $0.911\pm0.021$ & $0.600\pm0.023$ & $-0.707\pm0.142$ & $-1.175\pm0.115$ & $0.949\pm0.043$ & $131.8\pm43.4$ \\
 & SPARE ($K=5,R=3000$) & \best{217.1\pm3.0} & \best{3.51\pm0.04} & \best{13.88\pm0.48} & \best{0.881\pm0.022} & \best{0.573\pm0.023} & $-0.740\pm0.166$ & $-1.136\pm0.131$ & \best{0.927\pm0.034} & $509.3\pm142.8$ \\
\bottomrule
\end{tabular}}
\endgroup
\end{table}

\begin{table}[!t]
\centering
\caption{Cross-city method comparison at $\rho=1$. Each city resolves $N=\rho|V|$ on its own graph. Entries are mean $\pm$ population standard deviation over ten fixed demand subsamples; lower is better. \textsc{OOT} and \textsc{INC} denote timeout and incomplete execution.}
\label{tab:app-ratio-1}
\label{tab:app-full-matrix-3density}
\begingroup
\scriptsize
\renewcommand{\arraystretch}{0.74}
\setlength{\tabcolsep}{0.75pt}
\let\tablepm\pm
\renewcommand{\pm}{\mathord{\mkern-1mu{\scriptstyle\tablepm}\mkern-1mu}\scriptstyle}
\resizebox{\textwidth}{!}{\begin{tabular}{@{}llrrrrrrrrr@{}}
\toprule
\textbf{City} & \textbf{Method} & \shortstack{Over.\\(\%)} & \shortstack{Mean\\slowdown} & \shortstack{p99\\slowdown} & \shortstack{Origin\\SI} & \shortstack{Destination\\SI} & \shortstack{LI\\10--1} & \shortstack{LI\\8--2} & LGD & \shortstack{CPU\\time (s)} \\
\midrule
\multirow{9}{*}{\rotatebox{90}{Manhattan}} & SP & $367.9\pm8.3$ & $4.08\pm0.07$ & $12.03\pm0.39$ & $1.386\pm0.126$ & $0.442\pm0.074$ & $4.059\pm0.175$ & $2.548\pm0.156$ & $1.403\pm0.065$ & $4.0\pm0.9$ \\
 & GSP & $324.9\pm6.3$ & $3.70\pm0.06$ & $11.41\pm0.35$ & $1.112\pm0.039$ & $0.372\pm0.042$ & $3.763\pm0.211$ & $2.356\pm0.138$ & $1.298\pm0.073$ & \best{3.7\pm0.5} \\
 & TAP & $289.8\pm23.4$ & $3.61\pm0.21$ & $10.09\pm1.26$ & $0.973\pm0.112$ & $0.471\pm0.154$ & $1.894\pm0.171$ & $1.775\pm0.299$ & $0.746\pm0.090$ & $974.3\pm257.1$ \\
 & PIBT & $251.6\pm7.4$ & $3.69\pm0.06$ & $15.84\pm0.76$ & $1.047\pm0.078$ & $0.558\pm0.031$ & $-2.305\pm0.173$ & \best{-0.036\pm0.163} & $0.699\pm0.050$ & $57.1\pm22.8$ \\
 & G-PIBT & \best{146.0\pm0.5} & \best{2.46\pm0.01} & \best{5.74\pm0.16} & \best{0.459\pm0.027} & \best{0.220\pm0.017} & \best{-0.029\pm0.062} & $0.275\pm0.068$ & $0.134\pm0.017$ & $4034.6\pm1252.8$ \\
 & FAIR-RD & $292.6\pm6.2$ & $4.56\pm0.07$ & $19.84\pm1.13$ & $1.316\pm0.096$ & $0.761\pm0.078$ & $-6.235\pm0.210$ & $-1.379\pm0.176$ & $1.588\pm0.057$ & $41.6\pm18.8$ \\
 & GLC & $256.5\pm7.4$ & $3.45\pm0.07$ & $13.42\pm0.71$ & $1.013\pm0.078$ & $0.487\pm0.045$ & $0.843\pm0.111$ & $0.856\pm0.110$ & $0.381\pm0.037$ & $40.4\pm15.6$ \\
 & SPARE ($K=15,R=400$) & $203.6\pm3.5$ & $3.03\pm0.03$ & $9.41\pm0.24$ & $0.675\pm0.042$ & $0.329\pm0.021$ & $0.129\pm0.070$ & $0.212\pm0.079$ & $0.155\pm0.025$ & $53.2\pm22.6$ \\
 & SPARE ($K=5,R=3000$) & $165.0\pm2.4$ & $2.67\pm0.02$ & $7.06\pm0.17$ & $0.500\pm0.035$ & $0.229\pm0.017$ & $-0.162\pm0.073$ & $0.053\pm0.099$ & \best{0.098\pm0.018} & $158.8\pm62.1$ \\
\midrule
\multirow{9}{*}{\rotatebox{90}{Chicago}} & SP & $3222.5\pm106.1$ & $34.36\pm1.06$ & $103.62\pm3.89$ & $10.323\pm0.280$ & $2.828\pm0.383$ & \best{-1.356\pm1.349} & \best{-0.132\pm2.432} & $7.080\pm0.355$ & $115.6\pm46.5$ \\
 & GSP & $3153.4\pm58.7$ & $31.84\pm0.57$ & $91.20\pm1.59$ & $9.996\pm0.220$ & $3.550\pm0.314$ & $8.981\pm0.599$ & $7.207\pm0.967$ & $7.888\pm0.384$ & \best{99.6\pm43.7} \\
 & TAP & \textsc{OOT} & \textsc{OOT} & \textsc{OOT} & \textsc{OOT} & \textsc{OOT} & \textsc{OOT} & \textsc{OOT} & \textsc{OOT} & \textsc{OOT} \\
 & PIBT & $1540.8\pm14.6$ & $19.17\pm0.21$ & $81.29\pm1.13$ & \best{3.825\pm0.185} & $2.931\pm0.121$ & $-16.860\pm0.690$ & $-7.256\pm0.578$ & $6.735\pm0.135$ & $1811.9\pm834.5$ \\
 & G-PIBT & \textsc{OOT} & \textsc{OOT} & \textsc{OOT} & \textsc{OOT} & \textsc{OOT} & \textsc{OOT} & \textsc{OOT} & \textsc{OOT} & \textsc{OOT} \\
 & FAIR-RD & $1622.2\pm8.3$ & $20.51\pm0.15$ & $84.29\pm1.87$ & $5.054\pm0.238$ & $3.833\pm0.128$ & $-20.294\pm0.894$ & $-5.995\pm0.996$ & $7.858\pm0.113$ & $814.5\pm303.0$ \\
 & GLC & $1463.4\pm10.9$ & $17.31\pm0.13$ & $72.37\pm1.63$ & $4.540\pm0.224$ & $2.420\pm0.104$ & $-8.257\pm0.614$ & $-4.334\pm0.588$ & $5.672\pm0.130$ & $1073.4\pm412.2$ \\
 & SPARE ($K=15,R=400$) & $1440.0\pm13.0$ & $17.07\pm0.16$ & $72.33\pm1.59$ & $4.248\pm0.184$ & $2.359\pm0.089$ & $-8.123\pm0.658$ & $-4.434\pm0.590$ & \best{5.638\pm0.144} & $1599.7\pm581.2$ \\
 & SPARE ($K=5,R=3000$) & \best{1428.3\pm9.3} & \best{16.96\pm0.13} & \best{72.27\pm1.11} & $4.216\pm0.191$ & \best{2.330\pm0.094} & $-8.242\pm0.588$ & $-4.243\pm0.649$ & $5.647\pm0.153$ & $8513.4\pm2383.5$ \\
\midrule
\multirow{9}{*}{\rotatebox{90}{San Francisco}} & SP & $727.2\pm24.9$ & $8.23\pm0.27$ & $29.04\pm0.98$ & $3.087\pm0.155$ & $1.547\pm0.068$ & $2.934\pm0.282$ & $2.206\pm0.427$ & $2.031\pm0.149$ & $20.9\pm6.9$ \\
 & GSP & $662.2\pm14.5$ & $7.69\pm0.17$ & $29.04\pm0.89$ & $2.905\pm0.077$ & $1.345\pm0.057$ & $2.425\pm0.238$ & $1.853\pm0.474$ & $2.044\pm0.143$ & \best{13.2\pm5.0} \\
 & TAP & $676.7\pm33.8$ & $8.07\pm0.41$ & $29.84\pm2.39$ & $2.839\pm0.180$ & $1.253\pm0.118$ & \best{0.698\pm0.956} & \best{0.334\pm0.900} & $1.739\pm0.316$ & $7355.5\pm1756.8$ \\
 & PIBT & $318.6\pm4.1$ & $5.02\pm0.06$ & $24.20\pm0.56$ & $1.250\pm0.030$ & $0.933\pm0.023$ & $-4.844\pm0.286$ & $-3.124\pm0.275$ & $1.801\pm0.070$ & $264.7\pm75.8$ \\
 & G-PIBT & \textsc{OOT} & \textsc{OOT} & \textsc{OOT} & \textsc{OOT} & \textsc{OOT} & \textsc{OOT} & \textsc{OOT} & \textsc{OOT} & \textsc{OOT} \\
 & FAIR-RD & $341.2\pm3.8$ & $5.68\pm0.06$ & $26.89\pm0.44$ & $1.275\pm0.045$ & $1.059\pm0.029$ & $-7.497\pm0.328$ & $-5.133\pm0.265$ & $2.656\pm0.079$ & $101.9\pm35.7$ \\
 & GLC & $310.3\pm3.7$ & $4.60\pm0.05$ & $18.93\pm0.41$ & $1.233\pm0.029$ & $0.837\pm0.023$ & $-1.186\pm0.198$ & $-1.792\pm0.193$ & $1.343\pm0.040$ & $139.6\pm46.0$ \\
 & SPARE ($K=15,R=400$) & $303.2\pm3.3$ & $4.51\pm0.05$ & $18.38\pm0.43$ & $1.199\pm0.027$ & $0.806\pm0.024$ & $-1.178\pm0.211$ & $-1.788\pm0.179$ & $1.298\pm0.037$ & $182.1\pm64.7$ \\
 & SPARE ($K=5,R=3000$) & \best{293.8\pm3.1} & \best{4.42\pm0.05} & \best{18.10\pm0.54} & \best{1.162\pm0.023} & \best{0.779\pm0.024} & $-1.246\pm0.176$ & $-1.798\pm0.190$ & \best{1.272\pm0.038} & $705.9\pm232.7$ \\
\bottomrule
\end{tabular}}
\endgroup
\end{table}

\begin{table}[!t]
\centering
\caption{Cross-city method comparison at $\rho=5/4$. Each city resolves $N=\rho|V|$ on its own graph. Entries are mean $\pm$ population standard deviation over ten fixed demand subsamples; lower is better. \textsc{OOT} and \textsc{INC} denote timeout and incomplete execution.}
\label{tab:app-ratio-5-4}
\begingroup
\scriptsize
\renewcommand{\arraystretch}{0.74}
\setlength{\tabcolsep}{0.75pt}
\let\tablepm\pm
\renewcommand{\pm}{\mathord{\mkern-1mu{\scriptstyle\tablepm}\mkern-1mu}\scriptstyle}
\resizebox{\textwidth}{!}{\begin{tabular}{@{}llrrrrrrrrr@{}}
\toprule
\textbf{City} & \textbf{Method} & \shortstack{Over.\\(\%)} & \shortstack{Mean\\slowdown} & \shortstack{p99\\slowdown} & \shortstack{Origin\\SI} & \shortstack{Destination\\SI} & \shortstack{LI\\10--1} & \shortstack{LI\\8--2} & LGD & \shortstack{CPU\\time (s)} \\
\midrule
\multirow{9}{*}{\rotatebox{90}{Manhattan}} & SP & $480.0\pm16.3$ & $5.04\pm0.14$ & $14.86\pm0.45$ & $1.831\pm0.148$ & $0.592\pm0.129$ & $5.116\pm0.200$ & $3.319\pm0.260$ & $1.771\pm0.075$ & $6.2\pm0.4$ \\
 & GSP & $424.4\pm7.9$ & $4.53\pm0.06$ & $14.16\pm0.32$ & $1.460\pm0.057$ & $0.434\pm0.057$ & $4.800\pm0.204$ & $3.059\pm0.161$ & $1.666\pm0.079$ & \best{4.9\pm1.0} \\
 & TAP & $346.1\pm20.1$ & $4.12\pm0.18$ & $11.57\pm1.36$ & $1.272\pm0.117$ & $0.440\pm0.161$ & $2.256\pm0.322$ & $1.997\pm0.324$ & $0.894\pm0.104$ & $1481.0\pm512.8$ \\
 & PIBT & $316.5\pm6.9$ & $4.41\pm0.06$ & $19.67\pm0.72$ & $1.318\pm0.094$ & $0.712\pm0.058$ & $-3.105\pm0.169$ & \best{-0.060\pm0.133} & $0.922\pm0.040$ & $85.4\pm35.1$ \\
 & G-PIBT & \best{177.6\pm1.0} & \best{2.80\pm0.01} & \best{6.83\pm0.18} & \best{0.562\pm0.030} & \best{0.265\pm0.018} & $-0.164\pm0.065$ & $0.213\pm0.057$ & $0.156\pm0.017$ & $6604.6\pm2046.2$ \\
 & FAIR-RD & $366.8\pm7.6$ & $5.44\pm0.08$ & $24.49\pm0.91$ & $1.683\pm0.100$ & $0.984\pm0.071$ & $-7.775\pm0.321$ & $-1.683\pm0.197$ & $1.982\pm0.086$ & $66.1\pm28.2$ \\
 & GLC & $317.4\pm5.9$ & $4.06\pm0.06$ & $16.31\pm0.57$ & $1.266\pm0.061$ & $0.634\pm0.046$ & $0.807\pm0.152$ & $0.920\pm0.189$ & $0.414\pm0.030$ & $70.6\pm28.2$ \\
 & SPARE ($K=15,R=400$) & $256.1\pm3.5$ & $3.58\pm0.04$ & $11.62\pm0.48$ & $0.858\pm0.043$ & $0.413\pm0.036$ & \best{-0.041\pm0.152} & $0.166\pm0.103$ & $0.177\pm0.025$ & $70.6\pm30.5$ \\
 & SPARE ($K=5,R=3000$) & $203.4\pm2.0$ & $3.11\pm0.02$ & $8.97\pm0.17$ & $0.643\pm0.034$ & $0.274\pm0.023$ & $-0.433\pm0.091$ & $-0.211\pm0.057$ & \best{0.156\pm0.015} & $277.2\pm96.7$ \\
\midrule
\multirow{9}{*}{\rotatebox{90}{Chicago}} & SP & $4281.3\pm123.6$ & $44.68\pm1.04$ & $129.54\pm3.01$ & $13.233\pm0.354$ & $3.812\pm0.482$ & \best{2.217\pm2.171} & \best{2.168\pm3.279} & $8.882\pm0.404$ & $169.1\pm59.9$ \\
 & GSP & $4052.0\pm45.8$ & $40.53\pm0.45$ & $113.50\pm1.57$ & $12.769\pm0.226$ & $4.379\pm0.294$ & $12.538\pm1.346$ & $8.990\pm0.781$ & $9.924\pm0.318$ & \best{158.9\pm69.8} \\
 & TAP & \textsc{OOT} & \textsc{OOT} & \textsc{OOT} & \textsc{OOT} & \textsc{OOT} & \textsc{OOT} & \textsc{OOT} & \textsc{OOT} & \textsc{OOT} \\
 & PIBT & $1959.9\pm13.2$ & $24.25\pm0.19$ & $102.14\pm1.26$ & \best{4.805\pm0.166} & $3.636\pm0.096$ & $-22.109\pm0.751$ & $-9.836\pm0.768$ & $8.426\pm0.075$ & $4811.8\pm1812.9$ \\
 & G-PIBT & \textsc{OOT} & \textsc{OOT} & \textsc{OOT} & \textsc{OOT} & \textsc{OOT} & \textsc{OOT} & \textsc{OOT} & \textsc{OOT} & \textsc{OOT} \\
 & FAIR-RD & $2153.0\pm115.9$ & $27.07\pm1.64$ & $109.82\pm5.18$ & $6.993\pm0.675$ & $5.174\pm0.533$ & $-28.843\pm4.356$ & $-8.197\pm1.889$ & $10.102\pm0.537$ & $1050.4\pm196.0$ \\
 & GLC & $1833.2\pm11.9$ & $21.56\pm0.18$ & $90.89\pm1.93$ & $5.585\pm0.135$ & $3.064\pm0.132$ & $-11.082\pm0.777$ & $-5.803\pm1.175$ & $6.970\pm0.143$ & $1352.5\pm513.1$ \\
 & SPARE ($K=15,R=400$) & $1809.6\pm13.4$ & $21.28\pm0.16$ & \best{90.49\pm1.50} & $5.320\pm0.121$ & $2.961\pm0.103$ & $-10.715\pm0.575$ & $-5.875\pm0.919$ & \best{6.912\pm0.166} & $2486.7\pm869.7$ \\
 & SPARE ($K=5,R=3000$) & \best{1801.4\pm12.9} & \best{21.21\pm0.18} & $90.59\pm1.55$ & $5.304\pm0.100$ & \best{2.947\pm0.118} & $-10.789\pm0.774$ & $-5.799\pm1.117$ & $6.940\pm0.142$ & $10452.2\pm1884.7$ \\
\midrule
\multirow{9}{*}{\rotatebox{90}{San Francisco}} & SP & $948.4\pm25.9$ & $10.40\pm0.28$ & $36.21\pm1.78$ & $3.924\pm0.163$ & $2.002\pm0.082$ & $3.910\pm0.226$ & $2.887\pm0.419$ & $2.597\pm0.164$ & $32.6\pm10.2$ \\
 & GSP & $856.9\pm19.2$ & $9.66\pm0.22$ & $36.36\pm1.12$ & $3.689\pm0.102$ & $1.712\pm0.042$ & $3.317\pm0.213$ & $2.410\pm0.451$ & $2.668\pm0.135$ & \best{20.0\pm7.7} \\
 & TAP & $900.4\pm87.5$ & $10.40\pm0.97$ & $37.59\pm5.30$ & $3.832\pm0.449$ & $1.688\pm0.260$ & \best{1.706\pm0.705} & \best{1.263\pm1.351} & $2.794\pm0.380$ & $8649.0\pm2876.1$ \\
 & PIBT & $402.8\pm3.9$ & $6.08\pm0.05$ & $29.78\pm0.72$ & $1.551\pm0.034$ & $1.170\pm0.020$ & $-6.112\pm0.326$ & $-3.923\pm0.244$ & $2.273\pm0.064$ & $366.2\pm104.5$ \\
 & G-PIBT & \textsc{OOT} & \textsc{OOT} & \textsc{OOT} & \textsc{OOT} & \textsc{OOT} & \textsc{OOT} & \textsc{OOT} & \textsc{OOT} & \textsc{OOT} \\
 & FAIR-RD & $427.1\pm4.5$ & $6.82\pm0.07$ & $33.43\pm0.52$ & $1.575\pm0.054$ & $1.310\pm0.033$ & $-9.075\pm0.315$ & $-6.242\pm0.283$ & $3.244\pm0.079$ & $143.8\pm43.8$ \\
 & GLC & $389.7\pm4.4$ & $5.56\pm0.06$ & $23.45\pm0.43$ & $1.536\pm0.035$ & $1.062\pm0.018$ & $-1.863\pm0.224$ & $-2.512\pm0.192$ & $1.713\pm0.044$ & $209.3\pm62.5$ \\
 & SPARE ($K=15,R=400$) & $381.8\pm4.1$ & $5.47\pm0.06$ & $22.69\pm0.53$ & $1.496\pm0.032$ & $1.030\pm0.017$ & $-1.828\pm0.250$ & $-2.515\pm0.195$ & $1.669\pm0.037$ & $211.9\pm71.1$ \\
 & SPARE ($K=5,R=3000$) & \best{371.2\pm3.5} & \best{5.35\pm0.05} & \best{22.61\pm0.59} & \best{1.452\pm0.029} & \best{0.995\pm0.020} & $-1.869\pm0.207$ & $-2.480\pm0.179$ & \best{1.623\pm0.042} & $784.1\pm259.4$ \\
\bottomrule
\end{tabular}}
\endgroup
\end{table}

\begin{table}[!t]
\centering
\caption{Cross-city method comparison at $\rho=6/4$. Each city resolves $N=\rho|V|$ on its own graph. Entries are mean $\pm$ population standard deviation over ten fixed demand subsamples; lower is better. \textsc{OOT} and \textsc{INC} denote timeout and incomplete execution.}
\label{tab:app-ratio-6-4}
\begingroup
\scriptsize
\renewcommand{\arraystretch}{0.74}
\setlength{\tabcolsep}{0.75pt}
\let\tablepm\pm
\renewcommand{\pm}{\mathord{\mkern-1mu{\scriptstyle\tablepm}\mkern-1mu}\scriptstyle}
\resizebox{\textwidth}{!}{\begin{tabular}{@{}llrrrrrrrrr@{}}
\toprule
\textbf{City} & \textbf{Method} & \shortstack{Over.\\(\%)} & \shortstack{Mean\\slowdown} & \shortstack{p99\\slowdown} & \shortstack{Origin\\SI} & \shortstack{Destination\\SI} & \shortstack{LI\\10--1} & \shortstack{LI\\8--2} & LGD & \shortstack{CPU\\time (s)} \\
\midrule
\multirow{9}{*}{\rotatebox{90}{Manhattan}} & SP & $590.9\pm16.9$ & $6.00\pm0.14$ & $17.73\pm0.32$ & $2.244\pm0.138$ & $0.741\pm0.122$ & $6.161\pm0.287$ & $4.056\pm0.214$ & $2.135\pm0.075$ & $7.0\pm1.8$ \\
 & GSP & $527.3\pm7.4$ & $5.40\pm0.06$ & $16.91\pm0.36$ & $1.854\pm0.053$ & $0.545\pm0.057$ & $5.836\pm0.205$ & $3.865\pm0.155$ & $2.050\pm0.059$ & \best{5.9\pm1.5} \\
 & TAP & $446.3\pm32.8$ & $5.04\pm0.33$ & $15.02\pm2.13$ & $1.693\pm0.229$ & $0.642\pm0.219$ & $2.883\pm0.295$ & $2.443\pm0.448$ & $1.098\pm0.114$ & $1299.8\pm242.1$ \\
 & PIBT & $379.1\pm6.3$ & $5.09\pm0.07$ & $23.23\pm0.70$ & $1.623\pm0.140$ & $0.869\pm0.064$ & $-3.873\pm0.255$ & \best{-0.120\pm0.192} & $1.128\pm0.083$ & $120.3\pm50.8$ \\
 & G-PIBT & \best{209.9\pm1.0} & \best{3.15\pm0.01} & \best{8.05\pm0.11} & \best{0.695\pm0.035} & \best{0.309\pm0.016} & $-0.338\pm0.093$ & $0.130\pm0.042$ & $0.199\pm0.011$ & $8735.6\pm2981.8$ \\
 & FAIR-RD & $436.6\pm5.9$ & $6.30\pm0.08$ & $29.27\pm1.28$ & $2.008\pm0.173$ & $1.166\pm0.076$ & $-9.385\pm0.322$ & $-1.922\pm0.165$ & $2.397\pm0.092$ & $81.3\pm29.2$ \\
 & GLC & $379.6\pm10.6$ & $4.69\pm0.09$ & $19.15\pm0.66$ & $1.554\pm0.091$ & $0.769\pm0.033$ & $0.685\pm0.204$ & $0.969\pm0.193$ & $0.426\pm0.062$ & $113.7\pm44.2$ \\
 & SPARE ($K=15,R=400$) & $309.5\pm5.5$ & $4.13\pm0.06$ & $13.66\pm0.40$ & $1.090\pm0.060$ & $0.519\pm0.025$ & \best{-0.183\pm0.121} & $0.157\pm0.084$ & \best{0.194\pm0.024} & $114.3\pm48.1$ \\
 & SPARE ($K=5,R=3000$) & $243.6\pm2.2$ & $3.55\pm0.02$ & $10.68\pm0.26$ & $0.775\pm0.048$ & $0.321\pm0.015$ & $-0.666\pm0.088$ & $-0.361\pm0.115$ & $0.234\pm0.020$ & $408.8\pm142.9$ \\
\midrule
\multirow{9}{*}{\rotatebox{90}{Chicago}} & SP & $5262.1\pm161.8$ & $54.71\pm1.18$ & $156.54\pm2.51$ & $15.937\pm0.433$ & $4.863\pm0.491$ & \best{3.412\pm3.273} & \best{1.660\pm4.147} & $10.746\pm0.326$ & \best{255.8\pm88.9} \\
 & GSP & \textsc{INC} & \textsc{INC} & \textsc{INC} & \textsc{INC} & \textsc{INC} & \textsc{INC} & \textsc{INC} & \textsc{INC} & \textsc{INC} \\
 & TAP & \textsc{OOT} & \textsc{OOT} & \textsc{OOT} & \textsc{OOT} & \textsc{OOT} & \textsc{OOT} & \textsc{OOT} & \textsc{OOT} & \textsc{OOT} \\
 & PIBT & $2390.1\pm13.6$ & $29.50\pm0.19$ & $125.09\pm1.65$ & \best{5.975\pm0.152} & $4.337\pm0.163$ & $-27.913\pm0.612$ & $-11.565\pm0.779$ & $10.233\pm0.101$ & $8149.3\pm2562.1$ \\
 & G-PIBT & \textsc{OOT} & \textsc{OOT} & \textsc{OOT} & \textsc{OOT} & \textsc{OOT} & \textsc{OOT} & \textsc{OOT} & \textsc{OOT} & \textsc{OOT} \\
 & FAIR-RD & $2579.7\pm122.9$ & $32.20\pm1.73$ & $132.33\pm5.09$ & $8.458\pm0.821$ & $6.129\pm0.510$ & $-34.079\pm4.394$ & $-9.171\pm2.335$ & $11.974\pm0.568$ & $1428.7\pm280.9$ \\
 & GLC & $2217.1\pm14.7$ & $25.97\pm0.22$ & $110.00\pm2.13$ & $6.709\pm0.208$ & $3.629\pm0.189$ & $-14.298\pm0.586$ & $-6.577\pm1.153$ & $8.380\pm0.167$ & $1615.7\pm577.0$ \\
 & SPARE ($K=15,R=400$) & $2189.6\pm13.4$ & $25.65\pm0.18$ & $109.60\pm2.03$ & $6.447\pm0.162$ & $3.556\pm0.175$ & $-13.969\pm0.505$ & $-6.464\pm1.060$ & \best{8.348\pm0.175} & $2938.1\pm1147.1$ \\
 & SPARE ($K=5,R=3000$) & \best{2178.1\pm9.3} & \best{25.53\pm0.16} & \best{109.09\pm1.47} & $6.431\pm0.232$ & \best{3.528\pm0.184} & $-13.746\pm0.604$ & $-6.429\pm1.040$ & $8.364\pm0.154$ & $14761.9\pm2790.6$ \\
\midrule
\multirow{9}{*}{\rotatebox{90}{San Francisco}} & SP & $1153.8\pm19.9$ & $12.41\pm0.23$ & $42.54\pm2.15$ & $4.708\pm0.127$ & $2.412\pm0.086$ & $4.834\pm0.256$ & $3.471\pm0.440$ & $3.081\pm0.144$ & $34.8\pm10.5$ \\
 & GSP & $1044.0\pm19.7$ & $11.53\pm0.21$ & $43.03\pm0.97$ & $4.461\pm0.090$ & $2.068\pm0.075$ & $4.055\pm0.261$ & \best{2.913\pm0.401} & $3.202\pm0.130$ & \best{28.6\pm11.0} \\
 & TAP & \textsc{OOT} & \textsc{OOT} & \textsc{OOT} & \textsc{OOT} & \textsc{OOT} & \textsc{OOT} & \textsc{OOT} & \textsc{OOT} & \textsc{OOT} \\
 & PIBT & $487.8\pm5.1$ & $7.16\pm0.07$ & $35.32\pm0.64$ & $1.855\pm0.032$ & $1.418\pm0.019$ & $-7.413\pm0.282$ & $-4.822\pm0.333$ & $2.716\pm0.065$ & $553.2\pm185.3$ \\
 & G-PIBT & \textsc{OOT} & \textsc{OOT} & \textsc{OOT} & \textsc{OOT} & \textsc{OOT} & \textsc{OOT} & \textsc{OOT} & \textsc{OOT} & \textsc{OOT} \\
 & FAIR-RD & $513.1\pm3.1$ & $7.96\pm0.05$ & $39.83\pm0.49$ & $1.868\pm0.033$ & $1.562\pm0.026$ & $-10.714\pm0.224$ & $-7.315\pm0.290$ & $3.792\pm0.057$ & $265.4\pm104.5$ \\
 & GLC & $470.1\pm5.1$ & $6.55\pm0.07$ & $28.26\pm0.48$ & $1.849\pm0.030$ & $1.309\pm0.020$ & $-2.823\pm0.294$ & $-3.221\pm0.228$ & $2.046\pm0.046$ & $230.1\pm96.9$ \\
 & SPARE ($K=15,R=400$) & $462.1\pm4.7$ & $6.46\pm0.07$ & $27.71\pm0.47$ & $1.809\pm0.025$ & $1.277\pm0.019$ & \best{-2.806\pm0.264} & $-3.219\pm0.197$ & $1.998\pm0.050$ & $402.6\pm139.1$ \\
 & SPARE ($K=5,R=3000$) & \best{449.2\pm4.0} & \best{6.31\pm0.06} & \best{27.10\pm0.46} & \best{1.754\pm0.021} & \best{1.236\pm0.020} & $-2.817\pm0.268$ & $-3.169\pm0.218$ & \best{1.939\pm0.039} & $1438.1\pm429.5$ \\
\bottomrule
\end{tabular}}
\endgroup
\end{table}
}

\section{Cross-City Explicit-Fairness Comparison}
\label{app:explicit-fairness}

Tables~\ref{tab:app-frontier-chicago} and~\ref{tab:app-frontier-sf} extend
the explicit-fairness comparison to Chicago and San Francisco. Entries use
ten fixed demand subsamples. The five ITAP settings are marked \textsc{OOT}
when their global assignment exceeds the two-hour per-run limit.

\begin{table}[H]
\centering
\normalsize
\setlength{\tabcolsep}{1.1pt}
\renewcommand{\arraystretch}{1.03}
\newcommand{\std}[1]{\mathbin{\pm}{\scriptstyle #1}}
\caption{Explicit fairness comparison on Chicago demand at $\rho=1$. Entries
are mean $\pm$ population standard deviation over ten fixed demand
subsamples. \textsc{OOT} denotes a run exceeding the two-hour per-run limit.}
\label{tab:app-frontier-chicago}
\resizebox{\columnwidth}{!}{\begin{tabular}{lrrrrr}
\toprule
\textbf{Method} & \textbf{OH (\%)} & \textbf{Mean} & \textbf{LGD} & \textbf{O-SI} & \textbf{D-SI} \\
\midrule
\multicolumn{6}{@{}l}{\textit{Assignment-based fairness methods}} \\
ITAP-$0$ & \textsc{OOT} & \textsc{OOT} & \textsc{OOT} & \textsc{OOT} & \textsc{OOT} \\
ITAP-$0.25$ & \textsc{OOT} & \textsc{OOT} & \textsc{OOT} & \textsc{OOT} & \textsc{OOT} \\
ITAP-$0.5$ & \textsc{OOT} & \textsc{OOT} & \textsc{OOT} & \textsc{OOT} & \textsc{OOT} \\
ITAP-$0.75$ & \textsc{OOT} & \textsc{OOT} & \textsc{OOT} & \textsc{OOT} & \textsc{OOT} \\
ITAP-$1$ & \textsc{OOT} & \textsc{OOT} & \textsc{OOT} & \textsc{OOT} & \textsc{OOT} \\
\midrule
\multicolumn{6}{@{}l}{\textit{Routing and coordination baselines}} \\
FAIR-RD & $1622.2 \std{8.3}$ & $20.51 \std{0.15}$ & $7.858 \std{0.113}$ & $5.054 \std{0.238}$ & $3.833 \std{0.128}$ \\
GLC & $1463.4 \std{10.9}$ & $17.31 \std{0.13}$ & $5.672 \std{0.130}$ & $4.540 \std{0.224}$ & $2.420 \std{0.104}$ \\
\midrule
\rowcolor{gray!20}
\textbf{SPARE (ours)} & $\mathbf{1436.3} \std{9.8}$ & $\mathbf{17.09} \std{0.15}$ & $5.672 \std{0.164}$ & $\mathbf{4.163} \std{0.167}$ & $\mathbf{2.351} \std{0.090}$ \\
\bottomrule
\end{tabular}
}
\end{table}

\begin{table}[H]
\centering
\normalsize
\setlength{\tabcolsep}{1.1pt}
\renewcommand{\arraystretch}{1.03}
\newcommand{\std}[1]{\mathbin{\pm}{\scriptstyle #1}}
\caption{Explicit fairness comparison on San Francisco demand at $\rho=1$.
Entries are mean $\pm$ population standard deviation over ten fixed demand
subsamples. \textsc{OOT} denotes a run exceeding the two-hour per-run limit.}
\label{tab:app-frontier-sf}
\resizebox{\columnwidth}{!}{\begin{tabular}{lrrrrr}
\toprule
\textbf{Method} & \textbf{OH (\%)} & \textbf{Mean} & \textbf{LGD} & \textbf{O-SI} & \textbf{D-SI} \\
\midrule
\multicolumn{6}{@{}l}{\textit{Assignment-based fairness methods}} \\
ITAP-$0$ & \textsc{OOT} & \textsc{OOT} & \textsc{OOT} & \textsc{OOT} & \textsc{OOT} \\
ITAP-$0.25$ & \textsc{OOT} & \textsc{OOT} & \textsc{OOT} & \textsc{OOT} & \textsc{OOT} \\
ITAP-$0.5$ & \textsc{OOT} & \textsc{OOT} & \textsc{OOT} & \textsc{OOT} & \textsc{OOT} \\
ITAP-$0.75$ & \textsc{OOT} & \textsc{OOT} & \textsc{OOT} & \textsc{OOT} & \textsc{OOT} \\
ITAP-$1$ & \textsc{OOT} & \textsc{OOT} & \textsc{OOT} & \textsc{OOT} & \textsc{OOT} \\
\midrule
\multicolumn{6}{@{}l}{\textit{Routing and coordination baselines}} \\
FAIR-RD & $341.2 \std{3.8}$ & $5.68 \std{0.06}$ & $2.656 \std{0.079}$ & $1.275 \std{0.045}$ & $1.059 \std{0.029}$ \\
GLC & $310.3 \std{3.7}$ & $4.60 \std{0.05}$ & $1.343 \std{0.040}$ & $1.233 \std{0.029}$ & $0.837 \std{0.023}$ \\
\midrule
\rowcolor{gray!20}
\textbf{SPARE (ours)} & $\mathbf{303.2} \std{3.3}$ & $\mathbf{4.51} \std{0.05}$ & $\mathbf{1.298} \std{0.037}$ & $\mathbf{1.199} \std{0.027}$ & $\mathbf{0.806} \std{0.024}$ \\
\bottomrule
\end{tabular}
}
\end{table}

\section{Cross-City Scalability}
\label{app:cross-city-scalability}

Figure~\ref{fig:density} reports the Manhattan scalability comparison, while
Figures~\ref{fig:app-density-scalability-chicago} and
\ref{fig:app-density-scalability-sf} report the same node-normalized load
sweep in Chicago and San Francisco, respectively. These two figures use the
same six metrics and method encoding as the body figure, making the city-level
differences in effectiveness, fairness, and CPU scaling directly comparable.

\begin{figure*}[t]
\centering
\includegraphics[width=0.84\textwidth]{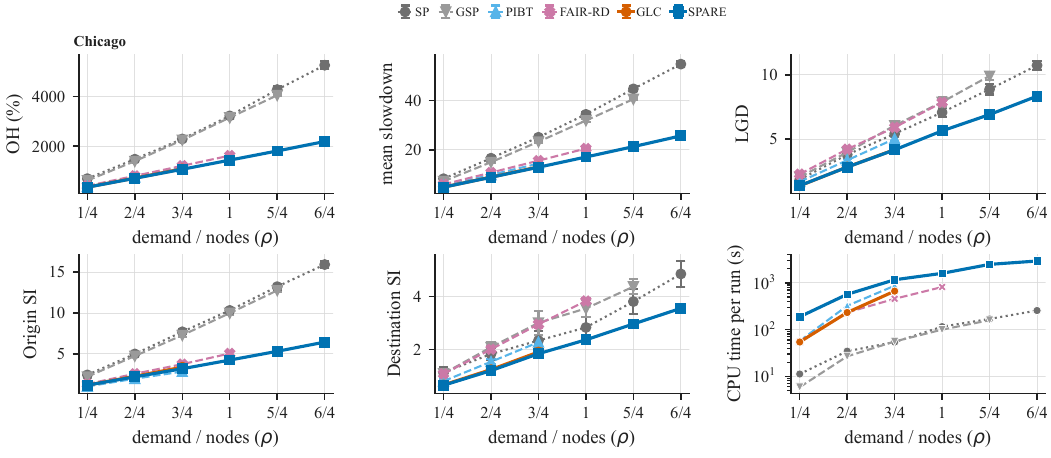}
\Description{Six scalability panels for Chicago show overhead, mean slowdown,
length-conditioned gap, origin spatial inequity, destination spatial inequity,
and per-run CPU time across six demand-to-node ratios.}
\captionof{figure}{Scalability with load ratio on Chicago demand. Lines and
error bars show mean $\pm$ population SD over ten fixed demand subsamples;
traces end when a method does not complete the next load.}
\label{fig:app-density-scalability-chicago}
\vspace{2pt}

\includegraphics[width=0.84\textwidth]{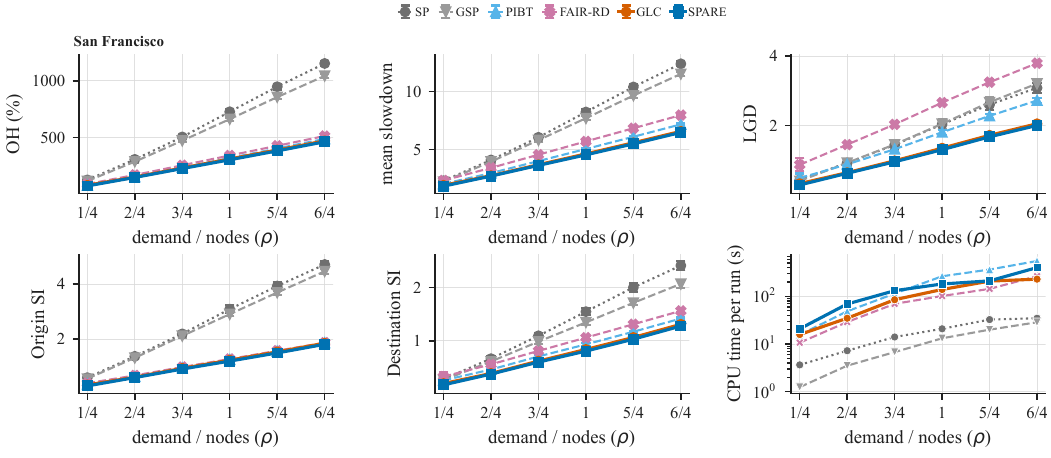}
\Description{Six scalability panels for San Francisco show overhead, mean
slowdown, length-conditioned gap, origin spatial inequity, destination spatial
inequity, and per-run CPU time across six demand-to-node ratios.}
\captionof{figure}{Scalability with load ratio on San Francisco demand. Lines
and error bars show mean $\pm$ population SD over ten fixed demand
subsamples; traces end when a method does not complete the next load.}
\label{fig:app-density-scalability-sf}
\end{figure*}

\section{Cross-City Parameter Sensitivity}
\label{app:adaptive}

Figure~\ref{fig:paramsens} reports the Manhattan sensitivity analysis, while
Figures~\ref{fig:app-paramsens-chicago} and~\ref{fig:app-paramsens-sf} show
the corresponding Chicago and San Francisco sweeps. The reference markers
identify the displayed $K$ and $R$ setting in each panel.

\begin{figure*}[t]
    \centering
    \begin{minipage}[t]{0.485\textwidth}
      \centering
      \includegraphics[width=\linewidth]{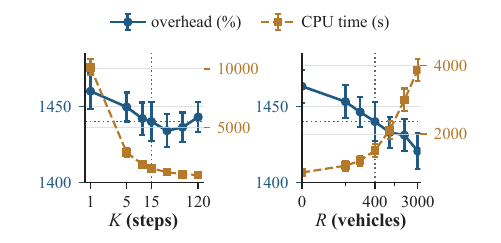}
      \Description{Two side-by-side Chicago panels show SPARE's sensitivity to review interval K with R fixed at 400 and reroute budget R with K fixed at 15. Blue curves show overhead and brown dashed curves show process CPU time. Dotted crosshairs identify the fixed reference setting, while the ordinary curve markers show its observations.}
      \captionof{figure}{Parameter sensitivity on Chicago. Curves show mean $\pm$ standard deviation over ten fixed demand subsamples.}
      \label{fig:app-paramsens-chicago}
    \end{minipage}\hfill
    \begin{minipage}[t]{0.485\textwidth}
      \centering
      \includegraphics[width=\linewidth]{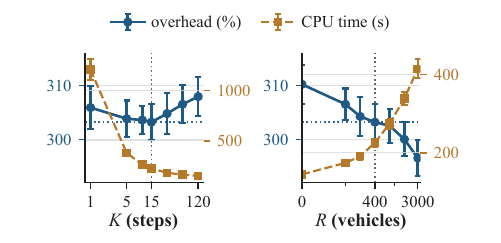}
      \Description{Two side-by-side San Francisco panels show SPARE's sensitivity to review interval K with R fixed at 400 and reroute budget R with K fixed at 15. Blue curves show overhead and brown dashed curves show process CPU time. Dotted crosshairs identify the fixed reference setting, while the ordinary curve markers show its observations.}
      \captionof{figure}{Parameter sensitivity on San Francisco. Curves show mean $\pm$ standard deviation over ten fixed demand subsamples.}
      \label{fig:app-paramsens-sf}
    \end{minipage}
\end{figure*}

\clearpage

\end{document}